\newtheorem{theorem}{Theorem}
\newtheorem{lemma}[theorem]{Lemma}
\icmltitlerunning{MG-GNN: Multigrid Graph Neural Networks for Learning Multilevel Domain Decomposition Methods}
\begin{document}

\twocolumn[
\icmltitle{MG-GNN: Multigrid Graph Neural Networks for Learning Multilevel Domain Decomposition Methods} % chktex 13

% It is OKAY to include author information, even for blind
% submissions: the style file will automatically remove it for you
% unless you've provided the [accepted] option to the icml2021
% package.

% List of affiliations: The first argument should be a (short)
% identifier you will use later to specify author affiliations
% Academic affiliations should list Department, University, City, Region, Country
% Industry affiliations should list Company, City, Region, Country

% You can specify symbols, otherwise they are numbered in order.
% Ideally, you should not use this facility. Affiliations will be numbered
% in order of appearance and this is the preferred way.

\begin{icmlauthorlist}

  \icmlauthor{Ali Taghibakhshi}{uiucmechse}
  \icmlauthor{Nicolas Nytko}{uiuccs}
  \icmlauthor{Tareq Uz Zaman}{munsci}
  \icmlauthor{Scott MacLachlan}{munscott}
  \icmlauthor{Luke N. Olson}{uiuccs}
  \icmlauthor{Matt West}{uiucmechse}
\end{icmlauthorlist}
\icmlaffiliation{uiucmechse}{Department of Mechanical Science and Engineering, University of Illinois at Urbana-Champaign}
\icmlaffiliation{uiuccs}{Department of Computer Science, University of Illinois at Urbana-Champaign}
\icmlaffiliation{munsci}{Scientific Computing Program, Memorial University of Newfoundland}
\icmlaffiliation{munscott}{Department of Mathematics and Statistics, Memorial University of Newfoundland}

\icmlcorrespondingauthor{Ali Taghibakhshi}{alit2@illinois.edu}

% You may provide any keywords that you
% find helpful for describing your paper; these are used to populate
% the "keywords" metadata in the PDF but will not be shown in the document
\icmlkeywords{Domain Decomposition, Optimized Schwarz Methods, Graph Neural Networks, Unsupervised Learning}

\vskip 0.3in
]

% this must go after the closing bracket ] following \twocolumn[ ...

% This command actually creates the footnote in the first column
% listing the affiliations and the copyright notice.
% The command takes one argument, which is text to display at the start of the footnote.
% The \icmlEqualContribution command is standard text for equal contribution.
% Remove it (just {}) if you do not need this facility.

%\printAffiliationsAndNotice{}  % leave blank if no need to mention equal contribution
\printAffiliationsAndNotice{} % otherwise use the standard text.

\begin{abstract}
  Domain decomposition methods (DDMs) are popular solvers for discretized systems of partial differential equations (PDEs), with one-level and multilevel variants.  These solvers rely on several algorithmic and mathematical parameters, prescribing overlap, subdomain boundary conditions, and other properties of the DDM.  While some work has been done on optimizing these parameters, it has mostly focused on the one-level setting or special cases such as structured-grid discretizations with regular subdomain construction. In this paper, we propose multigrid graph neural networks (MG-GNN), a novel GNN architecture for learning optimized parameters in two-level DDMs\@. We train MG-GNN using a new unsupervised loss function, enabling effective training on small problems that yields robust performance on unstructured grids that are orders of magnitude larger than those in the training set. We show that MG-GNN outperforms popular hierarchical graph network architectures for this optimization and that our proposed loss function is critical to achieving this improved performance.
\end{abstract}

\section{Introduction}

Differential equations are at the core of many important scientific and engineering problems~\cite{gholizadeh2021evaluation, han2011elliptic}, and often, there is no analytical solution available; hence, researchers utilize numerical solvers~\cite{ vos2002navier, gholizadeh2023suitability}. Among numerical methods for solving the systems of equations obtained from discretization of partial differential equations (PDEs), domain decomposition methods (DDMs) are a popular approach~\cite{MR2104179, MR1857663, MR3450068}.  They have been extensively studied and applied to elliptic boundary value problems, but are also considered for time-dependent problems.  Schwarz methods are among the simplest and most popular types of DDM, and map well to MPI-style parallelism, with both one-level and multilevel variants.
One-level methods decompose the global problem into multiple subproblems (subdomains), which are obtained either by discretizing the same PDE over a physical subdomain or by projection onto a discrete basis, using subproblem solutions to form a preconditioner for the global problem. Classical Schwarz methods generally consider Dirichlet or Neumann boundary conditions between the subdomains, while Optimized Schwarz methods (OSM)~\cite{GHN_optimized_2000} consider a combination of Dirichlet and Neumann boundary conditions, known as Robin-type boundary conditions, to improve the convergence of the method. Restricted additive Schwarz (RAS) methods~\cite{cai1999restricted} are a common form of Schwarz methods, and optimized versions of one-level RAS has been theoretically studied by~\citet{st2007optimized}.  Two-level methods extend one-level approaches by adding a (global) coarse-grid correction step to the preconditioner, generally improving performance but at an added cost.

In recent years, there has been a growing focus on using machine learning (ML) methods to learn optimized parameters for iterative PDE solvers, including DDM and algebraic multigrid (AMG). In~\citet{greenfeld2019learning} convolutional neural networks (CNNs) are used to learn the interpolation operator in AMG on structured problems, and in a following study~\cite{luz2020learning}, graph neural networks (GNNs) are used to extend the results to arbitrary unstructured grids. In a different fashion, reinforcement learning methods along with GNNs are used to learn coarse-gird selection in reduction-based AMG in~\citet{taghibakhshi2021optimization}.
As mentioned in~\citet{heinlein2021combining}, when combining ML methods with DDM, approaches can be categorized into two main families, namely using ML within a classical DDM framework to obtain improved convergence and using deep neural networks as the main solver or discretization module for DDMs. In a recent study~\cite{taghibakhshi2022learning}, GNNs are used to learn interface conditions in optimized Schwarz DDMs that can be applied to many subdomain problems, but their study is limited to one-level solvers. Two-level domain decomposition methods often converge significantly faster than one-level methods since they include coarse-grid correction, but obtaining optimized multilevel DDMs for  general unstructured problems with arbitrary subdomains remains an open challenge.

% TODO: The following paragraph was unintelligible to me.  Needs a re-read to see if whats written below is accurate.  Also needs consistency in notation for Unets, graph U-nets, vs. graph-Unets

% Ali: I removed ""GNNs are often categorized into \textit{spectral} approaches (i.e., those based on spectral graph theory), using convolution based on the graph Laplacian eigenbasis~\cite{defferrard2016convolutional}, and \textit{spatial}, which apply convolution by considering local neighborhoods of nodes in a graph to propagate information~\cite{gilmer2017neural}."" I found it unnecessary.
%Ali:
Graph neural networks (GNNs) extend learning based methods and convolution operators to unstructured data. Similar to structured problems, such as computer vision tasks, many graph-based problems require information sharing beyond just a limited local neighborhood in a given graph. However, unlike in CNNs, where often deep CNNs are used with residual skip connections to achieve long range information passing, GNNs dramatically suffer depth limitations. Stacking too many GNN layers results in \textit{oversmoothing}, which is due to close relation of graph convolution operators to Laplacian smoothing~\cite{li2018deeper, oono2019graph}. Oversmoothing essentially results in indistinguishable node representations after too many GNN layers, due to information aggregation in a large local neighborhood. Inspired by the Unet architecture in CNNs~\cite{ronneberger2015u}, graph U-nets~\cite{gao2019graph} were introduced as a remedy for longe-range information sharing in graphs without using too many GNN layers. Similar to their CNN counterparts, graph-Unet architectures apply down-sampling layers (pooling) to aggregate node information to a coarser representation of the problem with fewer nodes. This is followed by up-sampling layers (unpooling, with the same number of layers as pooling) to reconstruct finer representations of the problem and allow information to flow back to the finer levels from the coarser ones.

As mentioned in~\citet{ke2017multigrid}, U-net and graph-Unet architectures suffer from a handful of problems and non-optimalities. In these architectures, scale and abstraction are combined, meaning earlier, finer layers cannot access the information of the coarse layers. In other words, initial layers learn deep features only based on a local neighborhood without considering the larger picture of the problem. Moreover, finer levels do not benefit from information updates until the information flow reaches the coarsest level and flows back to the finer levels.  That is, the information flow has to complete a full (graph) U-net cycle to update the finest level information, which could potentially require multiple conventional layers, leading to oversmoothing in the case of graph U-nets. More recently, there has been similar hierarchical GNN architectures utilized for solving PDEs, such as those proposed by~\citet{fortunato2022multiscale} and~\citet{li2020multipole}. In each case, the architecture is similar to a U-net, in terms in terms of information flow (from the finest to coarsest graph and back), and there is no cross-scale information sharing, making them prone to the aforementioned U-net type problems.

To fully unlock the ability of GNNs to learn optimal DDM operators, and to mitigate the shortcomings of graph U-nets mentioned above, we introduce here a novel GNN architecture, multigrid graph neural networks (MG-GNN). MG-GNN information flow is parallel at all scales, meaning every MG-GNN layer processes information from both coarse and fine scale graphs. We employ this MG-GNN architecture to advance DDM-based solvers by developing a learning-based approach for two-level optimized Schwarz methods. Specifically, we learn the Robin-type subdomain boundary conditions needed in OSM as well as the overall coarse-to-fine interpolation operator. We also develop a novel loss function essential for achieving superior performance compared to previous two-level optimized RAS\@. The summary of contributions of this paper is as follows:
\begin{itemize}
\item Introduce a multigrid graph neural network (MG-GNN) architecture that outperforms existing hierarchical GNN architectures and scales linearly with problem size;
\item Improve the loss function with theoretical guarantees essential for training two-level Schwarz methods;
\item Enforce scalability, leading to effective training on small problems and generalization to problems that are orders of magnitude larger; and
\item Outperform classical two-level RAS, both as stationary algorithm and as a preconditioner for the flexible generalized minimum residual (FGMRES) iteration.
\end{itemize}

\section{Background}

In this section, we review one and two-level DDMs. Let $\Omega$ be an open set in $\mathbb{R}^{2}$, and consider the Poisson equation:
\begin{equation}\label{eq:Poisson}
-\Delta\Phi = f,
\end{equation}
where $\Delta$ is the Laplace operator and $f(x,y)$ and $\Phi(x,y)$ are real-valued functions. Alongside~\eqref{eq:Poisson}, we consider inhomogeneous Dirichlet conditions on the boundary of $\Omega$, $\partial\Omega$, and use a piecewise linear finite-element (FE) discretization on arbitrary triangulations of $\Omega$. In the linear FE discretization, every node in the obtained graph corresponds to a degree of freedom (DoF) in the discretization, and the set of all nodes is denoted by $D$. The set $D$ is decomposed into $S$ non-overlapping subdomains $\{D_{1}^{0}, D_{2}^{0}, \ldots, D_{S}^{0}\}$ (where the superscript in the notation indicates the amount of overlap; hence, the superscript zero for the non-overlapping decomposition). The union of the subdomains covers the set of all DoFs, $D = \cup D_{i}^{0}$, so that each node in $D$ is contained in exactly one $D_{i}^{0}$. Denote the restriction operator for discrete DoFs onto those in $D_{i}^{0}$ by $R_{i}^{0}$ and the corresponding extension from $D_{i}^{0}$ to $D$ by $(R_{i}^{0})^{T}$. Following the FE discretization of problem, we obtain a linear system to solve, $Ax=b$, where $A$ is the global stiffness matrix. For every $D_{i}^{0}$, we obtain the subdomain stiffness matrix as $A_{i}^{0} = R_{i}^{0}A(R_{i}^{0})^{T}$. In the OSM setting, alternative definitions to this Galerkin projection for $A_{i}^{0}$ are possible as noted below. To obtain the coarse-level representation of the problem, let $P\in\mathbb{R}^{S\times|D|}$ be the piecewise-constant interpolation operator that assigns every node in $D_{i}^{0}$ to the $i$-th coarse node. The coarse-level operator is then obtained as $A_{C} = P^TAP$.

The restricted additive Schwarz method (RAS)~\cite{cai1999restricted} is an important extension to the Schwarz methodology for the case of overlapping subdomains, where some nodes in $D$ belong to more than one subdomain. Denoting the overlap amount by $\delta\in\mathbb{N}$, we define the subdomains $D_{i}^{\delta}$ for $\delta>0$ by recursion, as $D_{i}^{\delta} = D_{i}^{\delta-1}\cup\{j\,|\,a_{kj}\neq0\text{\;for\;}k\in D_{i}^{\delta-1}\}$. For the coarse-grid interpolation operator, $P$, each of the overlapping nodes is now associated with multiple columns of $P$, which is typically chosen as a partition of unity, with rows of $P$ having equal non-zero weights (that can be interpreted as the probability of assigning a fine node to a given subdomain).
The conventional two-level RAS preconditioner is then defined by considering the fine-level operator, $M_{\text{RAS}}$, and the coarse-level correction operator, $C_{\text{2-RAS}}$, given by

\begin{align}
M_{\text{RAS}} = \sum\limits_{i=1}^{S} (\tilde{R}^{\delta}_{i})^{T}(A_{i}^{\delta})^{-1}{R_{i}^{\delta}}\label{eq:interface},\\
C_{\text{2-RAS}} = P(P^{T}AP)^{-1}P^{T}\label{eq:cgsolve},
\end{align} where $A_{i}^{\delta} = {(R_{i}^{\delta})}^{T}AR_{i}^{\delta}$. The operator $R_{i}^{\delta}$ denotes restriction for DoFs in D to those in $D_{i}^{\delta}$ while $\tilde{R}^{\delta}_{i}$ is a modified restriction from $D$ to $D_{i}^{\delta}$ that takes nonzero values only for DoFs in $D_{i}^{0}$.  The two-level RAS preconditioner is given as $M_{\text{2-RAS}} = C_{\text{2-RAS}} + M_{\text{RAS}} - C_{\text{2-RAS}}AM_{\text{RAS}}$, with the property that $I-M_{\text{2-RAS}}A = (I-C_{\text{2-RAS}}A)(I-M_{\text{RAS}}A)$.

In the case of optimized Schwarz, the subdomain systems (fine-level $A_{i}^{\delta}$) are modified by imposing a Robin boundary condition between subdomains, writing $\tilde{A}_{i}^{\delta} = A_i^{\delta} +L_{i}$, where  $L_i$ is the term resulting from the Robin-type condition:
\begin{align}
%                                \text{Dirichlet: } u & = g_{\text{D}}(x),\\
%             \text{Neumann: } \vec{n}\cdot\nabla u & = g_{\text{N}}(x),\\
%  \text{Robin: }
  \alpha u + \vec{n}\cdot\nabla u & = g(x),
\end{align}
where $g$ denotes inhomogeneous data and $\vec{n}$ is the outward unit normal to the boundary. The fine-level operator for optimized Schwarz is then given by
\begin{equation}\label{eq:M_ORAS}
M_{\text{ORAS}} = \sum\limits_{i = 1}^S\left(\tilde{R}_{i}^\delta\right)^{T}\left(\tilde{A}_{i}^{\delta}\right)^{-1}R_{i}^{\delta},
\end{equation}
where the choice of weight, $\alpha$, in the subdomain Robin boundary condition is a parameter for optimization.  Similarly, the method can be improved by optimizing the choice of coarse-level interpolation operator, $P$, but this has not been fully explored in the OSM literature.  Similarly to with RAS, we define the two-level ORAS preconditioner as $M_{\text{2-ORAS}} = C_{\text{2-RAS}} + M_{\text{ORAS}} - C_{\text{2-RAS}}AM_{\text{ORAS}}$.

The work of~\citet{taghibakhshi2022learning} suggests a method to learn $L_{i}$ for one-level ORAS\@. Here, we learn both $L_{i}$ and $P$ for two-level methods since, as later shown in Figure~\ref{fig:level-ablation-stationary}, the two-level methods are significantly more robust. Furthermore, as we show in Section~\ref{subsec:learningLP}, while learning both ingredients improves the performance, learning the interpolation operator, $P$, is significantly more important than learning $L_{i}$'s in order to obtain a two-level solver that outperforms classical two-level RAS\@.

\section{Multigrid graph neural network}\label{sec:mg-gnn}

The multigrid neural architecture~\cite{ke2017multigrid} is an architecture for CNNs that extracts higher level information in an image more efficiently by cross-scale information sharing, in contrast to other CNN architectures, such as U-nets, where abstraction is combined with scale. That is, in one multigrid layer, the information is passed between different scales of the problem, removing the necessity of using deep CNNs or having multilevel U-net architectures. Inspired by~\citet{ke2017multigrid}, we develop a multigrid architecture for GNNs, enabling cross-scale message (information) passing without making the GNN deeper; we call our architecture Multigrid GNN, or MG-GNN\@. Figure~\ref{fig:MG-GNN} shows one layer of the MG-GNN with two levels (a fine and a coarse level).
% TODO: why is Figure 1 so far away from this reference to it?

The input data to one layer of an MG-GNN has $L$ different graphs, from fine to coarse, denoted by $G^{(\ell)} = (X^{(\ell)}, A^{(\ell)})$, where $A^{(\ell)}\in\mathbb{R}^{n_\ell\times n_\ell}$ and $X^{(\ell)}\in\mathbb{R}^{n_{\ell}\times d}$ are adjacency and node feature matrices, respectively, and $n_\ell$ and $d$ are the number of nodes and node feature dimension in $\ell$-th graph for $\ell\in\{0,1, \ldots,L-1\}$, with $\ell=0$ denoting the finest level. If the input graph does not have multiple levels, we obtain the coarser levels recursively by considering a node assignment matrix (clustering operator) $R^{(\ell)}\in\mathbb{R}^{n_{\ell+1}\times n_{\ell}}$, for $\ell\in\{0,1, \ldots,L-2\}$:

\begin{align}
X^{(\ell+1)} = R^{(\ell)}X^{(\ell)},\label{eq:next_feature}\\
A^{(\ell+1)} = R^{(\ell)}A^{(\ell)}(R^{(\ell)})^{T}.\label{eq:next_A}
\end{align}

We note that, in general, the assignment matrix $R^{\ell}$ could be any pooling/clustering operator, such as $k$-means clustering, learnable pooling, etc. We denote $R^{(\ell\rightarrow k)}$ to be the assignment matrix of graph level $\ell$ to level $k$ (with $\ell < k$), which is constructed through $R^{(\ell\rightarrow k)} = \prod\limits_{j=\ell}^{k-1}R^{(j)}$ (down-sampling). To complement this terminology, we also define $R^{(k\rightarrow \ell)}= (R^{(\ell\rightarrow k)})^{T}$ for $\ell>k$ (up-sampling), and for the case of $\ell=k$, the assignment matrix is simply the identity matrix of dimension $n_\ell$.  The mathematical formalism of the $m$-th layer of the MG-GNN with $L$ levels is as follows: given all graphs feature matrices, $X_{m}^{(\ell)}$, for $\ell\in\{0,1,\ldots,L-1\}$:
\begin{align}
\dot{X}^{\ell\rightarrow k} = F^{\ell\rightarrow k}(X^{(\ell)}_{m}, X^{(k)}_{m}, R^{(\ell\rightarrow k)})\label{eq:fij}\\
\tilde{X}^{(\ell)}_{m} = [\dot{X}^{0\rightarrow \ell} \| \dot{X}^{1\rightarrow \ell} \| \ldots \| \dot{X}^{k-1\rightarrow \ell}]\\
X^{(\ell)}_{m+1} = \text{GNN}^{(\ell)}(\tilde{X}^{(\ell)}_{m}, A^{(\ell)})\label{eq:gnni}
\end{align}
where $\|$ denotes concatenation, and $\text{GNN}^{(\ell)}$ and $F^{\ell\rightarrow \ell}$ could be any homogeneous and heterogeneous GNNs, respectively. For the case of $\ell\ne k$, we consider $F^{\ell\rightarrow k}$ to be a heterogeneous message passing scheme between levels $\ell$ and $k$, which is defined as follows. Consider any node $v$ in $G^{(\ell)}$ and denote the row in $X^{(\ell)}_{m}$ corresponding to the feature vector of node $v$ by $x_v$. Then, $F^{\ell\rightarrow k}(X^{(\ell)}_{m}, X^{(k)}_{m}, R^{(\ell\rightarrow k)})$ is given by
\begin{align}
m_v=g^{\ell\rightarrow k}\left(\underset{{\omega}\in\mathcal{N}(v)}{\square}f^{\ell\rightarrow k}(x_v, x_{\omega}, e_{v\omega}), x_v\right)\label{eq:mpnn1}
\end{align}
where $e_{v\omega}$ is the feature vector of the edge (if any) connecting $v$ and $\omega$, $\square$ is any permutation invariant operator such as sum, max, min, etc., and $f^{\ell\rightarrow k}$ and $g^{\ell\rightarrow k}$ are learnable multilayer perceptrons (MLPs). See Figure~\ref{fig:UDsampling} for visualization of up-sampling and down-sampling in MG-GNN\@.
In this study, we consider a two-level MG-GNN (see Figure~\ref{fig:2Lmggnn}) and, for the clustering, we consider a $k$-means-based clustering algorithm (best known as Lloyd's algorithm) which has $O(n)$ time complexity and guarantees that every node will be assigned to a subdomain~\cite{bell2008algebraic, lloyd1982least}) in a connected graph.  As mentioned earlier, the MG-GNN architecture could alternatively use any pooling/clustering method such as DiffPool~\cite{ying2018hierarchical}, top-$K$ pooling~\cite{gao2019graph}, ASAP~\cite{ranjan2020asap}, SAGPool~\cite{lee2019self}, to name but a few. However, for the case of this paper, since RAS (and therefore, ORAS) necessitates every node in the fine grid be assigned to a subdomain, we do not consider the aforementioned pooling (clustering) methods.
%
% TODO: Why are the C's in the figure in a different font than the c's in the caption?
\begin{figure}
     \includegraphics[width=0.5\textwidth]{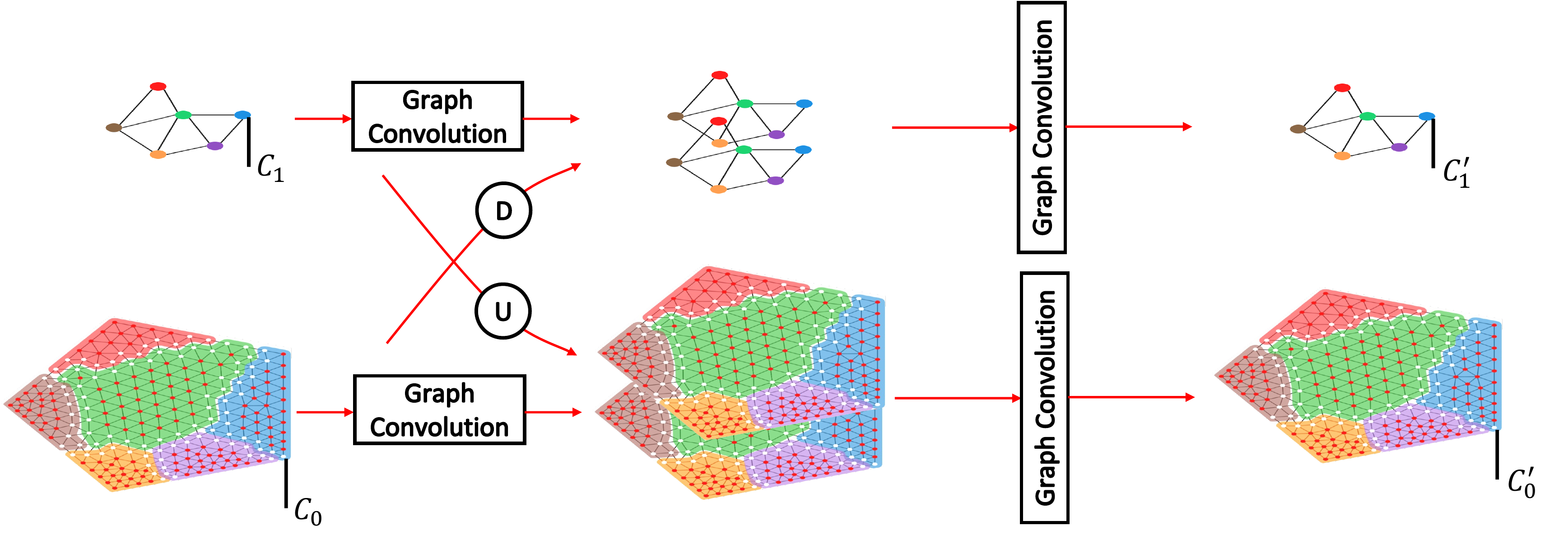}
     \vspace{-5mm}
  \caption{One layer of MG-GNN. $c_{i}$ and $c^{\prime}_{i}$ denote the feature dimensions of different levels before and after passing through an MG-GNN layer, respectively.}\label{fig:MG-GNN}\label{fig:2Lmggnn}
\end{figure}
\begin{figure}
     \includegraphics[width=0.5\textwidth]{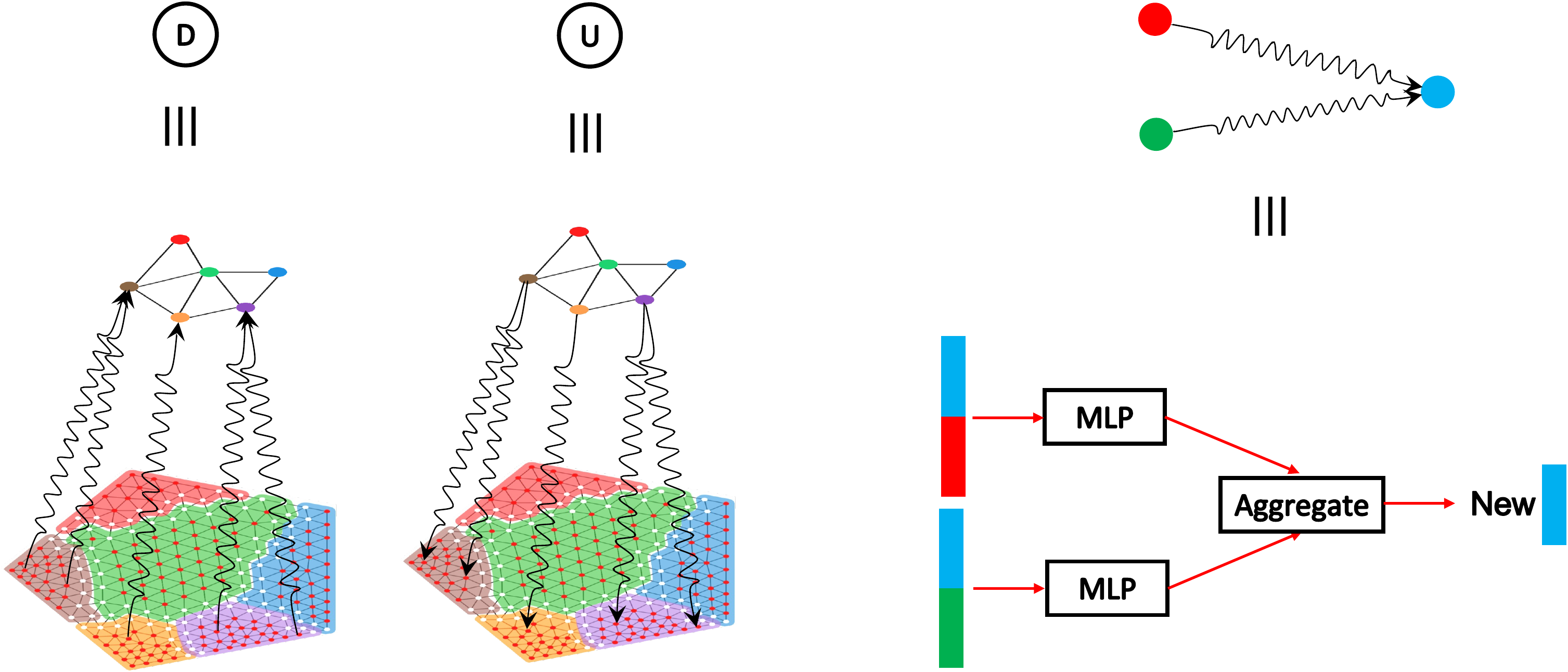}
     \vspace{-5mm}
  \caption{Upsampling and downsampling in MG-GNN.}\label{fig:UDsampling}
\end{figure}

\section{Optimization problem and loss function}\label{sec:loss}

In this section, we denote the $\ell^2$ norm of a matrix or vector by $\|\cdot\|$ and the spectral radius of matrix $T$ by $\rho(T)$. Our objective is to minimize the asymptotic convergence factor of the two-level ORAS method, defined as minimizing $\rho(T)$, where $T = I-M_{\text{2-ORAS}}A = (I-C_{\text{2-ORAS}}A)(I-M_{\text{ORAS}}A)$ is the error propagation operator of the method. Since $T$ is not necessarily symmetric, $\rho(T)$ is formally defined as the extremal eigenvalue of $T^{T}T$. As discussed in~\citet{wang2019backpropagation}, numerical unsuitability of backpropagation of an eigendecomposition makes it infeasible to directly minimize $\rho(T)$. To this end,~\citet{luz2020learning} relax the spectral radius to the Frobenius norm (which is an upper bound for it), and minimize that instead. However, for the case of optimizing one-level DDM methods, the work in~\citet{taghibakhshi2022learning} highlights that the Frobenius norm is not a ``tight'' upper bound for $\rho(T)$, and considers minimizing a relaxation of $\rho(T)$ inspired by Gelfand's formula, $\forall_{K\in\mathbb{N}}\;\;\;\rho(T) \leq \|T^K\|^{\frac{1}{K}} = \sup_{x: \|x\|=1} (\|T^{K}x\|)^\frac{1}{K}$. We present a modified version of the loss function introduced by~\citet{taghibakhshi2022learning} and, in Section~\ref{subsec:results}, we show the necessity of this modification for improving the two-level RAS results.

%\min_{\substack{L_i, i=1,2,\ldots, S \\ \text{sparsity of $L_i$}}} \sup_{x: \|x\|=1}  \|T^{K}x\|.

Consider the discretized problem with DoF set $D$ of size $n$, decomposed into $S$ subdomains, $D_1^\delta, D_2^\delta, \ldots, D_S^\delta$ with overlap $\delta$. The GNN takes $D$, its decomposition, and a sparsity pattern for the interface values and that of the interpolation operator as inputs and its outputs are the learned interface values and interpolation operator (see Appendix~\ref{sec:appendix-model} for more discussion on inputs and outputs of the network):
\begin{equation}
\label{eq:GNNinout}
{P^{(\theta)}, L_{1}^{(\theta)}, L_{2}^{(\theta)}, \ldots, L_{S}^{(\theta)}} \leftarrow \psi^{(\theta)}(D).
\end{equation}where $\psi^{\theta}$ denotes the GNN, and $\theta$ represents the learnable parameters in the GNN\@.

We obtain the modified two-level ORAS (Optimized Restricted Additive Schwarz) operator by using the learned coarse grid correction operator, $C^{\theta}_{\text{2-ORAS}} = P^{(\theta)}\left((P^{(\theta)})^{T}AP^{(\theta)}\right)^{-1}\left(P^{(\theta)}\right)^T$, and the fine grid operator, $M^{(\theta)}_{\text{ORAS}}$ from~(\ref{eq:GNNinout}).  The associated 2-level error propagation operator is then given by $T^{(\theta)} = (I - C^{(\theta)}_{\text{2-ORAS}}A)(I - M^{(\theta)}_{\text{ORAS}}A)$.

In order to obtain an approximate measure of $\rho(T^{(\theta)})$ while avoiding eigendecomposition of the error propagation matrix, similar to~\citet{taghibakhshi2022learning}, we use stochastic sampling of $\left\|\left(T^{(\theta)}\right)^K\right\|$, generated by the sample set $X\in\mathbb{R}^{n\times m}$ for some $m\in\mathbb{N}$, given as
\begin{equation}
X = [x_1, x_2, \ldots, x_m], \forall_{j} \; x_j\sim \mathbb{R}^{n} \; \text{uniformly}, \|x_j\| = 1,
\end{equation}
where each $x_j$ is sampled uniformly randomly on a unit sphere in $\mathbb{R}^{n}$ using the method introduced in~\citet{box1958note}. We then define
\begin{multline}
%\begin{split}
Y^{(\theta)}_{K} = \\ \left\{\left\|\left( T^{(\theta)}\right)^{K}x_1\right\|, \left\|\left(T^{(\theta)}\right)^{K}x_2\right\|, \ldots, \left\|\left(T^{(\theta)}\right)^{K}x_m\right\|\right\}.
%\end{split}
\end{multline}

Note that $\left\|\left( T^{(\theta)}\right)^{K}x_j\right\|$ is a lower bound for $\left\|\left( T^{(\theta)}\right)^{K}\right\|$. \citet{taghibakhshi2022learning} use $\mathcal{L}^{(\theta)} = \max(Y_K^{(\theta)})$ as a practical loss function. However, for large values of $K$, this loss function suffers from vanishing gradients. Moreover, as we show in Section~\ref{subsec:results}, employing this loss function results in inferior performance of the learned method in comparison to two-level RAS\@. To overcome these issues, we define $Z_{k}^{(\theta)} = \max((Y^{(\theta)}_{k})^{\frac{1}{k}})$ for $ 1\ \leq k \leq K$ to arrive at a new loss function,
\begin{equation}
\mathcal{L}^{(\theta)} = \langle\text{softmax}(Z^{(\theta)}),\; Z^{(\theta)}\rangle+\gamma\text{tr}\left((P^{(\theta)})^{T}AP^{(\theta)}\right)\label{eq:new_loss},
\end{equation}
where $Z^{(\theta)} = \left(Z_1^{(\theta)}, Z_2^{(\theta)}, \ldots, Z_K^{(\theta)}\right)$, $0<\gamma$ is an adjustable constant, and $\text{tr(M)}$ is the trace of matrix $M$. Adding the term $\text{tr}((P^{(\theta)})^{T}AP^{(\theta)})$ is inspired by energy minimization principles, to obtain optimal interpolation operators in theoretical analysis of multilevel solvers~\cite{xu1992iterative, wan1999energy}. In Section~\ref{subsec:results}, we show the significance of this term in the overall performance of our model. Nevertheless, for the first part of the new loss function~(\ref{eq:new_loss}), we prove that it convergence to the spectral radius of the error propagation matrix in a suitable limit. First, we include two lemmas:
\begin{lemma}\label{lem:useful}
For $x,y\in\mathbb{R}$, with $0\le y \le x$ and any $K\in\mathbb{N}$, $x^{\frac{1}{K}}-y^{\frac{1}{K}}\le(x-y)^{\frac{1}{K}}$
\end{lemma}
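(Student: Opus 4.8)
The plan is to reduce the statement to an equivalent polynomial inequality by a change of variables, and then prove that polynomial inequality by an elementary monotonicity argument. First I would dispose of the degenerate cases: when $K=1$ the claim is the trivial identity $x-y=x-y$, and when $x=0$ we must have $y=0$ (since $0\le y\le x$), so both sides vanish. Hence I may assume $K\ge 2$ and $x>0$.

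Next, set $a=x^{1/K}$ and $b=y^{1/K}$, so that $a\ge b\ge 0$ and $a>0$. Because the map $t\mapsto t^{1/K}$ is strictly increasing on $[0,\infty)$, the desired inequality $a-b=x^{1/K}-y^{1/K}\le (x-y)^{1/K}=(a^K-b^K)^{1/K}$ is equivalent, after raising both nonnegative sides to the $K$-th power, to
\[
(a-b)^K \le a^K-b^K.
\]
Dividing through by $a^K>0$ and writing $t=b/a\in[0,1]$, this is in turn equivalent to showing $(1-t)^K\le 1-t^K$, i.e.\ $t^K+(1-t)^K\le 1$ for every $t\in[0,1]$.

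The crux is this last inequality, which I would establish from the elementary fact that $s^K\le s$ whenever $s\in[0,1]$ and $K\ge 1$ (indeed $s^K=s\,s^{K-1}\le s$ since $s^{K-1}\le 1$). Applying this with $s=t$ and with $s=1-t$, both of which lie in $[0,1]$, and adding the two resulting inequalities yields $t^K+(1-t)^K\le t+(1-t)=1$, as required. Unwinding the reductions---multiplying back by $a^K$ and taking $K$-th roots---then gives $x^{1/K}-y^{1/K}\le (x-y)^{1/K}$.

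I do not anticipate a serious obstacle here: the only point demanding care is the reduction step, where one must check that taking $K$-th roots is legitimate (both sides are nonnegative) and that the normalization by $a^K$ is valid (handled by the assumption $x>0$, with the $x=0$ case treated separately). Once the problem is cast as $t^K+(1-t)^K\le 1$ on $[0,1]$, the monotonicity bound $s^K\le s$ finishes it immediately; a convexity or calculus argument on $h(t)=1-t^K-(1-t)^K$ (noting $h(0)=h(1)=0$) would also work but is heavier than necessary.
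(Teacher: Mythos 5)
Your proof is correct. Note that the paper itself contains no internal argument to compare against: its ``proof'' of this lemma is a one-line deferral to Lemma 3 of \citet{taghibakhshi2022learning}. Your argument is therefore a genuinely self-contained alternative that would make the paper independent of that external reference. The chain of reductions is sound: the degenerate cases $K=1$ and $x=0$ are handled separately; the substitution $a=x^{1/K}$, $b=y^{1/K}$ and the passage to $(a-b)^K\le a^K-b^K$ is legitimate because both sides of the original inequality are nonnegative and $t\mapsto t^K$ is increasing on $[0,\infty)$; and the normalized inequality $t^K+(1-t)^K\le 1$ on $[0,1]$ follows immediately from $s^K\le s$ for $s\in[0,1]$. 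For comparison, the other standard route (and the one most commonly used for this fact) is subadditivity of the $K$-th root: for $u,v\ge 0$ one has $\left(u^{1/K}+v^{1/K}\right)^K\ge u+v$ by the binomial theorem, since the two extreme terms of the expansion already sum to $u+v$; applying this with $u=x-y$, $v=y$ and rearranging gives the claim. That version avoids the case split and the normalization by $a^K$, but it is the same inequality viewed through the decomposition $x=(x-y)+y$; your monotonicity argument is, if anything, more elementary since it never expands a $K$-th power.
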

\begin{proof}
See Lemma 3 from~\citet{taghibakhshi2022learning}.
\end{proof}

\begin{lemma}\label{lem:lem}

For any nonzero square matrix $T\in\mathbb{R}^{n\times n}$, $k\in\mathbb{N}$, $\epsilon, \xi>0$, and $0<\delta<1$, there exists $M\in\mathbb{N}$ such that for any $m\ge M$, if we choose $x_1, x_2, \ldots, x_m$ uniformly random from $\{x\in\mathbb{R}^{n}\,|\, \|x\| = 1\}$, and $Z = \max\{\|T^{k}x_1\|^{\frac{1}{k}}, \|T^{k}x_2\|^{\frac{1}{k}}, \ldots, \|T^{k}x_m\|^{\frac{1}{k}}\}$ then, with a probability of at least ($1-\delta$), the following hold:
\begin{align}
&0\le\|T^{k}\|^{\frac{1}{k}}-Z\le\epsilon,\label{eq:part1}\\
&\rho(T)-\xi\le Z\label{eq:part2}.
\end{align}
\end{lemma}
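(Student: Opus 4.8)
The plan is to split the two claimed inequalities into a deterministic part and a probabilistic part, and to reduce the latter to a single ``hit a spherical cap'' event. Write $a=\|T^{k}\|^{\frac{1}{k}}$, and recall that by compactness of the unit sphere and continuity of $x\mapsto\|T^{k}x\|$ the value $a^{k}=\|T^{k}\|=\sup_{\|x\|=1}\|T^{k}x\|$ is attained at some unit vector $x^{\ast}$ (a top right singular vector of $T^{k}$). Since $\|T^{k}x_{j}\|\le\|T^{k}\|$ for every unit vector, we get $\|T^{k}x_{j}\|^{\frac{1}{k}}\le a$ and hence $Z\le a$ with probability one; this already gives the left-hand inequality $0\le\|T^{k}\|^{\frac{1}{k}}-Z$ of~\eqref{eq:part1}. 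I would also record the Gelfand-type bound $\rho(T)\le a$, which follows from $\rho(T)^{k}=\rho(T^{k})\le\|T^{k}\|=a^{k}$.

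Next I would observe that the two remaining inequalities---the upper bound in~\eqref{eq:part1}, namely $Z\ge a-\epsilon$, and~\eqref{eq:part2}, namely $Z\ge\rho(T)-\xi$---are both lower bounds on $Z$, so they hold \emph{simultaneously} if and only if $Z\ge\beta$, where $\beta:=\max(a-\epsilon,\,\rho(T)-\xi)$. Because $\epsilon,\xi>0$ and $\rho(T)\le a$, we have $\beta<a$. The degenerate situations are disposed of immediately: if $T^{k}=0$ then $Z=a=0$ and both claims are trivial, and if $\beta<0$ then $Z\ge 0>\beta$ holds deterministically. So it remains to treat the case $0\le\beta<a$.

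For the probabilistic core, consider the region $W=\{x:\|x\|=1,\ \|T^{k}x\|^{\frac{1}{k}}>\beta\}$. By continuity of $x\mapsto\|T^{k}x\|$ this is a relatively open subset of the sphere, and it is nonempty since $x^{\ast}\in W$ (its value is $a>\beta$); thus $W$ is a genuine spherical cap. Because the uniform distribution on the sphere has full support, its measure $p:=\mu(W)$ is strictly positive. The samples $x_{1},\dots,x_{m}$ are i.i.d., so the probability that none lands in $W$ is $(1-p)^{m}$, and on the complementary event some $x_{j}\in W$ forces $Z\ge\|T^{k}x_{j}\|^{\frac{1}{k}}>\beta$. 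Choosing $M=\big\lceil\ln\delta/\ln(1-p)\big\rceil$ makes $(1-p)^{m}\le\delta$ for every $m\ge M$, so $Z\ge\beta$---and therefore both~\eqref{eq:part1} and~\eqref{eq:part2}---holds with probability at least $1-\delta$.

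The main obstacle is the measure-theoretic step: one must argue that the near-maximizing cap $W$ carries positive probability mass, which rests on continuity of $\|T^{k}\,\cdot\,\|$, attainment of the supremum at $x^{\ast}$, and full support of the uniform measure on the sphere. Everything else is bookkeeping: the left inequality of~\eqref{eq:part1} and the degenerate cases are deterministic, and fusing the two lower bounds into the single threshold $\beta$ lets me control one failure event $(1-p)^{m}$ rather than invoking a union bound. I would not expect to need Lemma~\ref{lem:useful} here; it appears tailored to the subsequent convergence theorem rather than to this sampling estimate.
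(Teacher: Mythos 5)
Your proof is correct, and it takes a genuinely different route from the paper's. The paper argues quantitatively: it fixes a maximizer $x^*$ of $\|T^k x\|$ on the unit sphere, shows that with probability at least $1-\delta$ some sample lands within distance $\tilde\epsilon = \epsilon/\|T^k\|^{1/k}$ of $x^*$, and then converts proximity of the sample point into proximity of values via Lemma~\ref{lem:useful} combined with the reverse triangle inequality, yielding $\|T^k x^*\|^{1/k}-\|T^k x_1\|^{1/k}\le \|T^k\|^{1/k}\|x^*-x_1\|^{1/k}\le\epsilon$; inequality~\eqref{eq:part2} is then obtained by re-running the argument for~\eqref{eq:part1} with the surrogate accuracy $\epsilon = \|T^{k}\|^{\frac{1}{k}} - \rho(T)+\xi$ and taking $M=\max\{M_1,M_2\}$. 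You instead fuse the two lower bounds into the single threshold $\beta=\max(a-\epsilon,\rho(T)-\xi)<a$ and make a purely topological, measure-theoretic observation: the super-level set $W$ is open (by continuity) and nonempty (it contains $x^*$), hence has positive uniform measure $p$, so the miss probability $(1-p)^m$ can be driven below $\delta$. Your route buys three things: it never divides by $\|T^k\|^{1/k}$, so it also covers the case $T^k=0$ (a nonzero $T$ can be nilpotent, and the paper's $\tilde\epsilon$ is undefined there); it needs no Lipschitz-type estimate, hence no Lemma~\ref{lem:useful}; and it controls one failure event rather than two, so no union bound or second sample-size choice is needed. What the paper's argument buys in exchange is a more explicit handle on $M$: the sample size is tied to the relative area $\tilde{S}/S$ of a metric cap of known radius around $x^*$, which could in principle be estimated in terms of $\epsilon$, $\|T^k\|$, and the dimension $n$, whereas your $p=\mu(W)$ is only shown to be positive. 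Two small notes: your set $W$ is not literally a spherical cap (it is symmetric under $x\mapsto -x$), though only openness and nonemptiness matter for the argument; and your closing guess is off --- the paper invokes Lemma~\ref{lem:useful} precisely in this proof, not in Theorem~\ref{thm:optimality}.
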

\begin{proof}
The left side of the first inequality is achieved by considering the definition of matrix norm, i.e.\ for any $1\le i\le m$, $\|T^{k}x_i\|\le\sup\limits_{\|x\|=1}\|T^{k}x\| = \|T^{k}\|$, then taking the $k^{\text{th}}$ root of both sides. For the right side of the first inequality, consider the point $x^{*}\in\{x\in\mathbb{R}^{n}\,|\, ||x|| = 1\}$ such that $ \|T^{k}x^{*}\|=\sup\limits_{\|x\|=1}\|T^{k}x\|$ (such a point exists since $\mathbb{R}^{n}$ is finite dimensional). Let $S$ be the total volume of the surface of an $n$ dimensional unit sphere around the origin, and denote by $\tilde{S}$ the volume on this surface within distance $\tilde{\epsilon}$ of the point $x^{*}$ in the $\ell^2$ measure, for $\tilde{\epsilon} = \frac{\epsilon}{\|T^k\|^{\frac{1}{k}}}$. Let $m \geq M_1 > \frac{\log(\delta)}{\log\left(1-\frac{\tilde{S}}{S}\right)}$, then, since $0<\delta<1$, we have:
\begin{equation}
P(\|x^{*} - x_{i}\|>\tilde{\epsilon},\;\;\forall_{i})=\left(1-\frac{\tilde{S}}{S}\right)^m\le\delta
\end{equation}Therefore, with probability of at least $(1-\delta)$, there is one $x_i$ within the $\tilde{\epsilon}$ neighborhood of $x^*$ on the unit sphere. Without loss of generality, let $x_1$ be that point. Using Lemma~\ref{lem:useful} and the reverse triangle inequality, we have
\begin{multline}
  \left\|T^k x^*\right\|^\frac{1}{k}-\left\|T^k x_1\right\|^\frac{1}{k}\leq\left(\left\|T^k x^*\right\|-\left\|T^k x_1\right\|\right)^{\frac{1}{k}} \\
  \le\left\|T^k\right\|^{\frac{1}{k}}\|x^*-x_1\|^{\frac{1}{k}}\le\|T^k\|^{\frac{1}{k}}\tilde{\epsilon} = \epsilon
\end{multline}
which finishes the proof for the right side of the first inequality.

For the second inequality, since $\rho(T)\le\|T^{k}\|^{\frac{1}{k}}$, choose $M_2$ such that, with probability $1-\delta$, \eqref{eq:part1} holds for $\epsilon = \|T^{k}\|^{\frac{1}{k}} - \rho(T)+\xi > 0$.  Rearranging~\eqref{eq:part1} then yields~\eqref{eq:part2} for any $m \geq M = \max\{M_1,M_2\}$.
\end{proof}

We next state the main result on optimality.
\begin{theorem}\label{thm:optimality}
For any nonzero matrix $T$, $\epsilon>0$, and $\delta <1$, there exist $M, K\in\mathbb{N}$ such that for any $m>M$, if one chooses $m$ points, $x_j$, uniformly at random from $\{x\in\mathbb{R}^n\,|\, \|x\| = 1\}$ and defines $Z_k = \max\{\|T^{k}x_1\|^{\frac{1}{k}}, \|T^{k}x_2\|^{\frac{1}{k}}, \ldots, \|T^{k}x_m\|^{\frac{1}{k}}\}$, then $Z=(Z_1, Z_2, \ldots, Z_K)$ satisfies:
  \begin{equation}\label{eq:opt}
  P\left(\left|\langle\text{softmax}(Z),\; Z\rangle-\rho(T)\right|\le\epsilon\right) > 1 - \delta.
  \end{equation}
  \end{theorem}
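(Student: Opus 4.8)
The plan is to exploit that $\langle\text{softmax}(Z),\,Z\rangle = \sum_{k=1}^{K} w_k Z_k$ is a convex combination of the coordinates $Z_k$, with strictly positive weights $w_k = e^{Z_k}/\sum_{j=1}^K e^{Z_j}$, and to show that for $K$ large this combination is squeezed into an $\epsilon$-window around $\rho(T)$. The two ingredients I would combine are: (i) the deterministic bounds $\rho(T)\le\|T^k\|^{1/k}\le\|T\|$ (the right inequality from submultiplicativity) together with Gelfand's formula, which supplies a threshold $k_0$ with $\|T^k\|^{1/k}\le\rho(T)+\epsilon/2$ for all $k\ge k_0$; and (ii) Lemma~\ref{lem:lem}, which after a union bound over $k=1,\dots,K$ gives, with probability at least $1-\delta$, the lower sandwich $\rho(T)-\xi\le Z_k$ for every $k$ (the matching deterministic upper bound $Z_k\le\|T^k\|^{1/k}$ comes for free). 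The lower bound on the quantity of interest is then immediate: since the weights are positive and sum to one, $\langle\text{softmax}(Z),\,Z\rangle\ge\min_k Z_k\ge\rho(T)-\xi$ on the union-bound event, and I would set $\xi=\epsilon$ to close this side.

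The upper bound is the crux and the main obstacle, precisely because softmax assigns the largest weight to the largest coordinate, while for small $k$ the coordinate $Z_k$ may be as large as $\|T\|\gg\rho(T)$; naively one would fear the average is dragged toward $\|T\|$. I would resolve this by a counting argument, splitting the indices into ``good'' ones ($k\ge k_0$) and the finitely many ``bad'' ones ($k<k_0$). For good indices the Gelfand bound gives $Z_k\le\rho(T)+\epsilon/2$, so the good part of the sum contributes at most $\rho(T)+\epsilon/2$. For bad indices I would use $Z_k\le\|T\|$ and, crucially, control their total weight: the numerator $\sum_{k<k_0}e^{Z_k}\le(k_0-1)e^{\|T\|}$ is a constant independent of $K$, whereas the denominator satisfies the deterministic bound $\sum_{j=1}^K e^{Z_j}\ge K$ (each $Z_j\ge 0$). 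Hence the total bad contribution is at most $\|T\|\,(k_0-1)e^{\|T\|}/K$, which vanishes as $K\to\infty$ and can be driven below $\epsilon/2$ by choosing $K$ large. This concentration of softmax mass onto the many good coordinates, despite their smaller individual weights, is the heart of the argument.

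Finally I would fix the constants in the forced order: given $\epsilon,\delta,T$, set $\xi=\epsilon$ and pick the Gelfand threshold $k_0$ for tolerance $\epsilon/2$; then choose $K\ge 2\|T\|(k_0-1)e^{\|T\|}/\epsilon$ so the bad contribution is at most $\epsilon/2$; and only then invoke Lemma~\ref{lem:lem} for each $k\le K$ with failure probability $\delta/K$, taking $M=\max_{k\le K}M_k$. Combining the two sides yields $\rho(T)-\epsilon\le\langle\text{softmax}(Z),\,Z\rangle\le\rho(T)+\epsilon$ on an event of probability exceeding $1-\delta$, which is exactly~\eqref{eq:opt}. The only delicate point is the bookkeeping: the probabilistic input is needed solely for the lower sandwich $Z_k\ge\rho(T)-\xi$, so the union bound must control just the $K$ lower-tail events, while every upper estimate ($Z_k\le\|T\|$, the Gelfand threshold, and $\sum_j e^{Z_j}\ge K$) is deterministic.
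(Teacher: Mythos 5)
Your proof is correct, and it follows the same skeleton as the paper's: Gelfand's formula supplies a threshold splitting the indices into finitely many ``bad'' small-$k$ coordinates and ``good'' large-$k$ coordinates, Lemma~\ref{lem:lem} supplies the probabilistic lower bound $Z_k \ge \rho(T)-\epsilon$ for every $k$, and the upper bound rests on the observation that the total softmax weight of the bad coordinates decays like $1/K$. Where you genuinely depart from the paper is in how that decay is established, and your version is cleaner. The paper lower-bounds the softmax denominator by $Ke^{\rho(T)-\epsilon}$, which requires conditioning on the same high-probability event used for the lower bound, and it then needs a delicately tuned auxiliary constant $\alpha$ (constrained by a logarithmic inequality), an auxiliary cap $u$ on the bad coordinates, and an explicit but rather opaque formula for $K$. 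You instead use the deterministic bounds $Z_k \ge 0$ (so $\sum_j e^{Z_j} \ge K$) and $Z_k \le \|T^k\|^{1/k} \le \|T\|$ (submultiplicativity), which makes your entire upper bound deterministic and yields the transparent choice $K \ge 2\|T\|(k_0-1)e^{\|T\|}/\epsilon$; randomness then enters only through the lower sandwich, which you handle by a union bound with per-index failure probability $\delta/K$ rather than the paper's argument via independent groups of sample points (your union bound is also slightly more robust, since it requires no independence across indices). The one piece of bookkeeping both proofs gloss over equally is that Lemma~\ref{lem:lem} delivers probability at least $1-\delta$ rather than strictly greater than $1-\delta$ as claimed in~\eqref{eq:opt}; invoking the lemma with, say, failure probability $\delta/(2K)$ per index closes that cosmetic gap in your version.
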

\begin{proof}
Since $\rho(T)\le\|T^{k}\|^{\frac{1}{k}}$ for any $k$ and $\lim_{k\rightarrow\infty} \|T^k\|^{\frac{1}{k}} = \rho(T)$, for any $0<\alpha$, there exists $K^*\in\mathbb{N}$ such that for any $k>K^*$, $0\le\|T^{k}\|^{\frac{1}{k}}-\rho(T)<\alpha$. Take $0<\alpha<\min\{\frac{\epsilon}{2}, \log(\frac{e^{-\epsilon}(\epsilon+\rho(T))}{\frac{\epsilon}{2}+\rho(T)})\}$, let $u = \max\{\max\limits_{1\le k\le K^*}\{\|T^k\|^{\frac{1}{k}}\}+\alpha, \rho(T)+2\alpha\}$, $\tilde{\delta}=1-(1-\delta)^{\frac{1}{K}}$, and take $K > \max\{\frac{K^*(ue^u-(\rho(T)+\alpha)e^{\rho(T)+\alpha})}{e^{\rho(T)-\epsilon}(\epsilon+\rho(T)-(\rho(T)+\alpha)e^{\alpha+\epsilon})}, K^*\}$. Note that, by the choice of $\alpha$, we have $\rho(T)+\alpha<\rho(T)+\frac{\epsilon}{2}$ and $e^{\alpha+\epsilon}<\frac{\rho(T)+\epsilon}{\rho(T)+\frac{\epsilon}{2}}$, which (along with the choice of $u$) guarantees a positive $K$. By Lemma~\ref{lem:lem}, for any $1\le i\le K^*$ and $K^{*}<j\le K$, there exists $n_i, n_j\in\mathbb{N}$ such that:
\begin{align}
&P(\rho(T)-\epsilon\le Z_i\le u)>1-\tilde{\delta}&\text{for}\;\; m>n_i,\label{eq:less-k}\\
&P(\rho(T)-\epsilon\le Z_j\le\rho(T)+\alpha)>1-\tilde{\delta}&\text{for}\;\; m>n_j\label{eq:great-k}.
\end{align}For any $1\le k\le K$, take $n_k$ independent points on unit sphere so that the above inequalities are satisfied for all $k$. Note that this can be achieved by taking $M=\sum\limits_{k=1}^{K}n_k$. Since the points for satisfying equations~(\ref{eq:less-k}) and~(\ref{eq:great-k}) are chosen independently, for any $m>M$, with probability of at least $\prod\limits_{k=1}^{K}(1-\tilde{\delta}) = 1- \delta$, we have $\rho(T)-\epsilon\le Z_k$ for all $1\le k\le K$. Consequently:
\begin{align}
&-\epsilon=\frac{(\rho(T)-\epsilon)\sum\limits_{i=1}^{K}e^{Z_i}}{\sum\limits_{i=1}^{K}e^{Z_i}}-\rho(T)\le \frac{\sum\limits_{i=1}^{K}Z_i e^{Z_i}}{\sum\limits_{i=1}^{K}e^{Z_i}}-\rho(T)\\
&=\langle\text{softmax}(Z),\; Z\rangle-\rho(T)\\
&\le \frac{uK^*e^u+(K-K^*)(\rho(T)+\alpha)e^{\rho(T)+\alpha}}{Ke^{\rho(T)-\epsilon}}-\rho(T)\\
&= \frac{K^*(ue^u-(\rho(T)+\alpha)e^{\rho(T)+\alpha})}{Ke^{\rho(T)-\epsilon}}\nonumber\\
&\hspace{7em} +(\rho(T)+\alpha)e^{\alpha+\epsilon} - \rho(T) \le\epsilon,
\end{align}
where the last inequality is obtained by the choice of $K$.
\end{proof}

In addition to these properties of the loss function, we now show that obtaining the learned parameters using our MG-GNN architecture scales linearly with the problem size.
\begin{theorem}\label{thm:complexity}
The time complexity to obtain the optimized interface values and interpolation operator using our MG-GNN is $O(n)$, where $n$ is the number of nodes in the grid.
\end{theorem}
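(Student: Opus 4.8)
The plan is to treat the claim as a bookkeeping argument over the forward evaluation of $\psi^{(\theta)}$, decomposing it into three parts: (i) construction of the coarse levels, (ii) the per-layer homogeneous and heterogeneous message passing, and (iii) the read-off of $P^{(\theta)}$ and $L_1^{(\theta)},\ldots,L_S^{(\theta)}$ from the final features. Throughout I treat as $O(1)$ constants all quantities that do not grow with $n$: the node-feature widths $d$ and $c_i$, the depth and width of every MLP $f^{\ell\to k}, g^{\ell\to k}$ and of each $\text{GNN}^{(\ell)}$, the number of levels $L$ (here $L=2$), and the number of stacked MG-GNN layers. I also use that the fine graph $G^{(0)}$ arising from a piecewise-linear FE discretization on a shape-regular triangulation is sparse, i.e.\ has bounded node degree and hence $O(n)$ edges. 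Crucially, the claim concerns \emph{obtaining} the operators (the GNN forward pass), not \emph{applying} the resulting preconditioner, so no subdomain or coarse solves enter the count.

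First I would bound the cost of forming the hierarchy. Lloyd's algorithm produces the assignment matrix $R^{(\ell)}$ in $O(n)$ time and assigns every fine node to exactly one cluster, so $R^{(\ell)}$ has exactly one nonzero per (fine) column and therefore $O(n_\ell)=O(n)$ nonzeros in total. The coarse feature update $X^{(\ell+1)}=R^{(\ell)}X^{(\ell)}$ touches each nonzero of $R^{(\ell)}$ a constant number of times, costing $O(n)$. For the coarse adjacency $A^{(\ell+1)}=R^{(\ell)}A^{(\ell)}(R^{(\ell)})^{T}$, the key observation is that each nonzero of $A^{(\ell)}$ (each fine edge) maps to at most one entry of $A^{(\ell+1)}$, so the coarse graph has no more nonzeros than the fine graph up to a constant factor; hence every level has $O(n)$ nodes and $O(n)$ edges, and forming it costs $O(n)$.

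Next I would account for one MG-GNN layer. The heterogeneous map $F^{\ell\to k}$ of~\eqref{eq:mpnn1} passes messages along the cluster-membership connections encoded by $R^{(\ell\to k)}$; since these number $O(n)$ and each message is a fixed-size MLP evaluation, $F^{\ell\to k}$ costs $O(n)$. The homogeneous $\text{GNN}^{(\ell)}$ of~\eqref{eq:gnni} performs one $O(1)$ update per node and one per edge at level $\ell$, i.e.\ $O(n)$, and the concatenation in~\eqref{eq:fij} and~\eqref{eq:gnni} is $O(1)$ per node. Summing over the $O(1)$ ordered level pairs $(\ell,k)$ and the $L=O(1)$ levels gives $O(n)$ per layer, and multiplying by the constant number of layers keeps the total at $O(n)$. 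Finally, the outputs $P^{(\theta)}$ and the $L_i^{(\theta)}$ are obtained by filling their prescribed sparsity patterns, each with $O(n)$ entries, using $O(1)$ work per entry, again $O(n)$.

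The main point to be careful about is the coarsening step: one must rule out that the Galerkin-type product $R^{(\ell)}A^{(\ell)}(R^{(\ell)})^{T}$ densifies the coarse graph and that the inter-level message passing involves more than $O(n)$ edges. Both are controlled by the one-nonzero-per-column structure of the Lloyd assignment (every fine node belongs to exactly one subdomain) together with the observation that $RAR^{T}$ cannot create more nonzeros than $A$ up to a constant factor. Once this is established, the remainder is the routine accounting above under the standing $O(1)$ assumptions on mesh degree, feature width, MLP size, level count, and layer count.
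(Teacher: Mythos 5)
Your proof is correct, and at its core it is the same argument as the paper's: a sparsity-based accounting that every in-level and cross-level convolution does $O(1)$ work per node and per edge of a bounded-degree finite-element graph, with the level count, layer count, and all network widths treated as constants. The two write-ups differ in what they make explicit. The paper's proof is tied to the concrete architecture: it verifies that the TAGConv layers are linear by writing the graph filter as a polynomial $G_{\ell}=\sum_{j=0}^{J}g_{\ell,j}M^{j}$ in the sparse adjacency matrix $M$ with $J$ fixed, and then notes that the edge-feature computations in the two output heads cost $O(n)$; it leaves implicit that the coarse graph is itself sparse and cheap to construct. You instead treat the convolutions generically ($O(1)$ work per node and edge) but supply exactly the bookkeeping the paper glosses over: Lloyd clustering runs in $O(n)$ and yields an assignment matrix with one nonzero per column, hence the Galerkin product $R^{(\ell)}A^{(\ell)}(R^{(\ell)})^{T}$ has no more nonzeros than $A^{(\ell)}$ (each fine edge maps to a single coarse entry), so every level has $O(n)$ nodes and edges; and the final read-off of $P^{(\theta)}$ and the $L_i^{(\theta)}$ fills $O(n)$-sparse patterns. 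In that sense your version is somewhat more complete, and the two proofs are complementary rather than divergent. One small inaccuracy worth flagging: by Section~\ref{subsec:training}, the fine-to-coarse sparsity connects each fine node to the coarse node of its own \emph{and neighboring} subdomains, so the cross-level edge count is $O(n)$ by partition regularity (a bounded number of neighboring subdomains per node) rather than strictly by the nonzeros of $R^{(\ell\to k)}$; this does not affect the conclusion.
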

\begin{proof}
Every in/cross-level graph convolution of the MG-GNN has linear complexity. This must be the case when the graph convolution is a message passing scheme due to the sparsity in finite-element triangulations. For the case that the graph convolution is a TAGConv layer, we have $y = \sum_{\ell=1}^{L}G_{\ell}x_{\ell} + b\mathbf{1}_{n}$, where $x_{\ell}\in\mathbb{R}^{n}$ are the node features, $L$ is the node feature dimension, $b$ is a learnable bias, and $G_{\ell} \in  \mathbb{R}^{n\times n}$ is the graph filter. In TAGConv layers, the graph filter is given as $G_{\ell} = \sum_{j=0}^{J}g_{\ell,j}M^{j}$, where $M$ is the adjacency matrix, $J$ is a constant, and $g_{\ell,j}$ are the filter polynomial coefficients. In other words, the graph filter it is a polynomial in the adjacency matrix $M$ of the graph. Moreover, the matrix $M$ is sparse, hence obtaining $M^j$ has $O(n)$ computation cost, resulting in full TAGConv $O(n)$ time complexity. Moreover, for both the interface value head and the interpolation head of the network, the cost of calculating edge feature and the feature networks are $O(n)$, resulting in overall $O(n)$ cost of MG-GNN\@.
\end{proof}

%\textcolor{blue}{***Note: Since the subdomain sizes scale with the square root of the problem size, obtaining the optimized 2-level MLORAS is not $O(N^{1.5})$, hence not linear. Therefore, unlike the previous paper where we had linear time complexity to obtain MLORAS, here we just have a linear time complexity to obtain learned interface values and interpolation operator.}

\section{Experiments}\label{sec:experiments}

\subsection{Training}\label{subsec:training}

We train each model on 1000 grids of sizes ranging from 800--1000 nodes. The grids are generated randomly as a convex polygon and using PyGMSH~\citep{Schlmer_pygmsh_A_Python} for meshing its interior. The subdomains are generated using Lloyd clustering on the graph~\citep{bell2008algebraic}, the subdomain overlap is set to one, and the weights of the edges along the boundary determine the interface value operators, $L^{(\theta)}_i$. As shown in the interpolation head of the network in Appendix~\ref{sec:appendix-model} Figure~\ref{fig:overallgnn}, the weight of the edges connecting the coarse and fine grids determine the interpolation operator.  In our case, the edges between the coarse and fine grids connect every fine node to the coarse node corresponding to its own subdomain and its neighboring subdomains.  Alternatively, every fine node could connect only to the coarse node corresponding to its subdomain but, as we discuss in Section~\ref{subsec:results}, this significantly impacts the performance of the model. Moreover, each row of the interpolation operator, $P^{(\theta)}$, is scaled to have sum of one, as would be the case for classical interpolation operators. Figure~\ref{fig:training_example} shows several example training grids.

The model is trained for 20 epochs with batch size of 10 using the ADAM optimizer~\cite{kingma2014adam} with a fixed learning rate of $5\times 10^{-4}$. For the full discussion on model architecture, see Appendix~\ref{sec:appendix-model} and Figure~\ref{fig:overallgnn}. For the loss function parameters introduced in Section~\ref{sec:loss}, we use $K=10$ iterations and $m=100$ samples. We developed our code\footnote[1]{All code and data for this paper is at \url{https://github.com/JRD971000/Code-Multilevel-MLORAS/} (MIT licensed).} using PyAMG~\cite{BeOlSc2022}, NetworkX~\cite{hagberg2008exploring}, and PyTorch Geometric~\cite{Fey_Lenssen_2019}. All training is executed on an i9 Macbook Pro CPU with 8 cores.
%\footnote[1]{All code and data for this paper is at \url{https://github.com/JRD971000/Code-Multilevel-MLORAS/} (MIT licensed).}
In the training procedure, we aim to minimize the convergence of the stationary algorithm and, as described in Section~\ref{sec:loss}, we develop a loss function to achieve this goal by numerically minimizing the spectral radius of the error propagation matrix. In practice, optimized RAS methods are often used as preconditioners for Krylov methods such as FGMRES\@; as shown in Appendix~\ref{sec:appendix}, the trained models using this procedure also outperform other baselines when used as preconditioners for FGMRES\@. Directly training to minimize FGMRES iterations would require using FGMRES in the training loop and backpropagation through sparse-sparse matrix multiplication~\cite{nytko2022optimized}, which is left for future studies.

\begin{figure}
     \includegraphics[width=0.48\textwidth,trim=0 10 0 10]{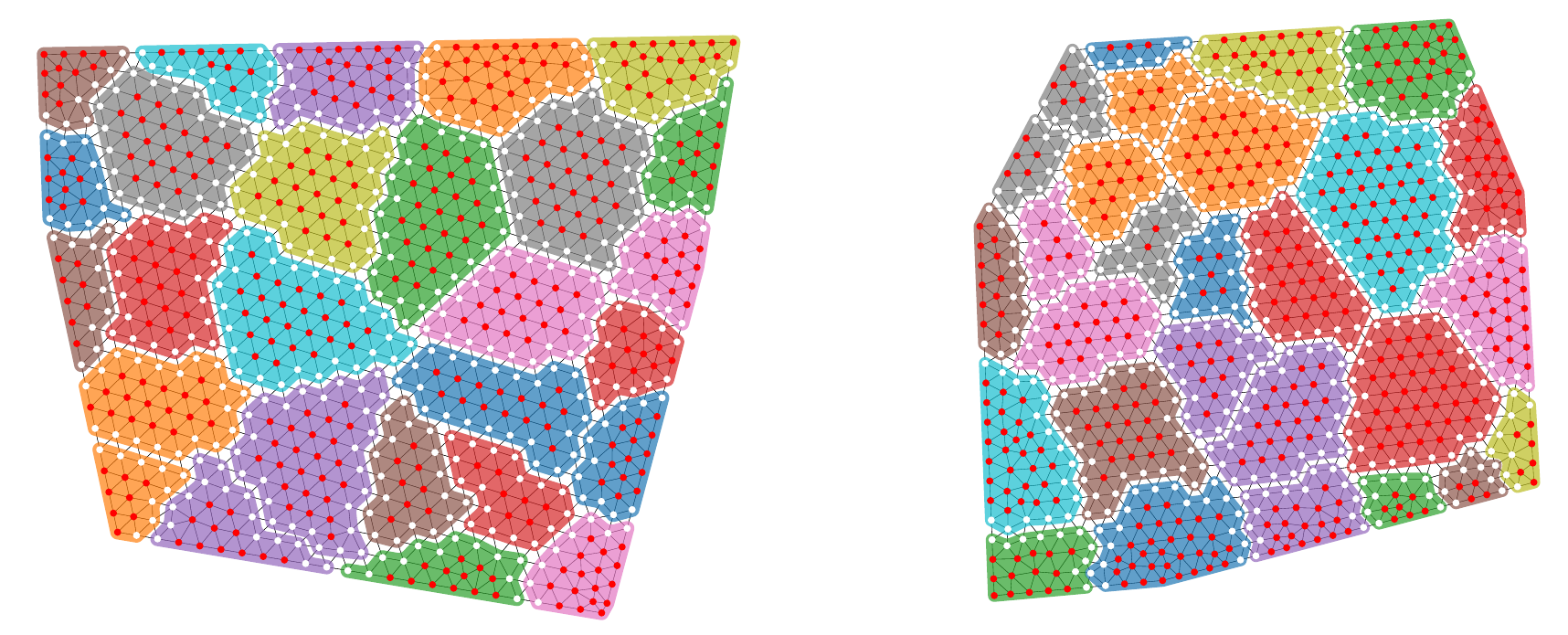}
     \vspace{-5mm}
       \caption{Training grid examples with about 1k nodes.}\label{fig:training_example}
\end{figure}

We evaluate the model on test grids that are generated in the same fashion as the training grids, but are larger in size, ranging from 800 to 60k DoFs. An example of a test grid is shown in Figure~\ref{fig:test_grids}.

\begin{figure}
     \includegraphics[width=0.48\textwidth,trim=0 5 0 5]{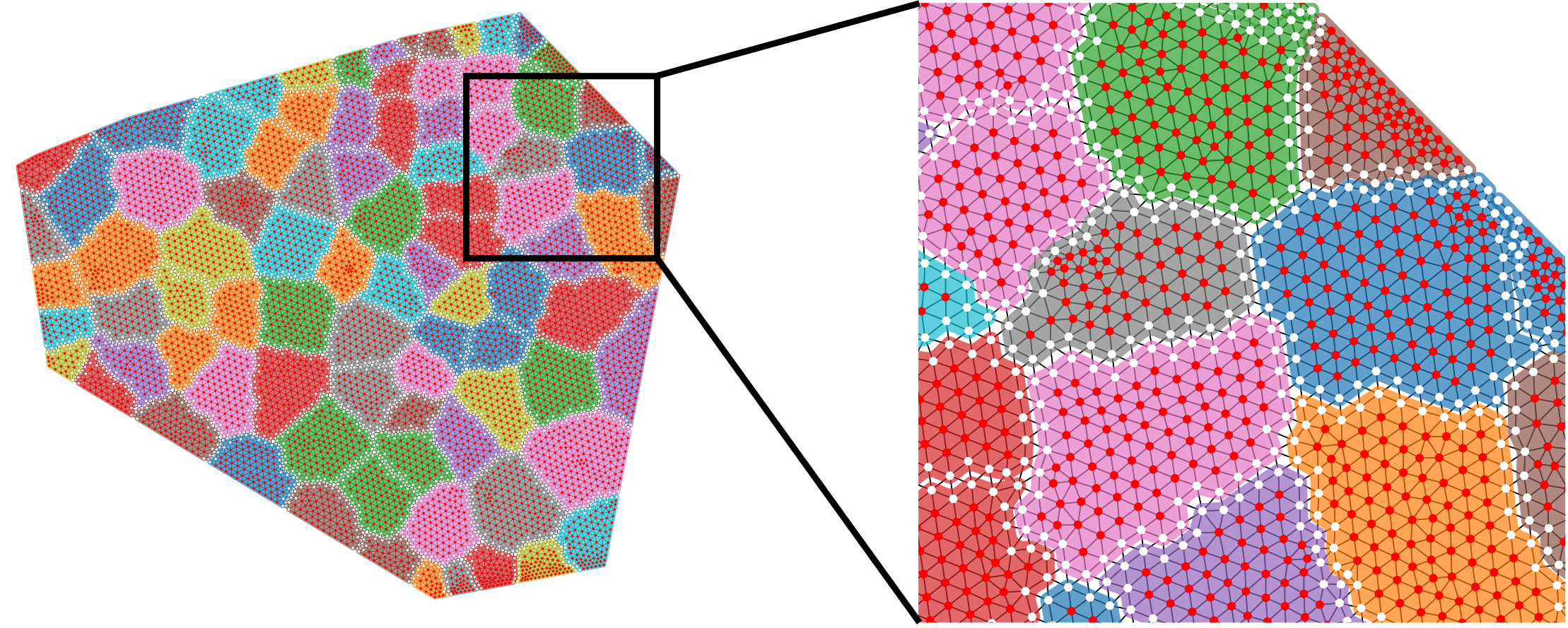}
     \vspace{-5mm}
       \caption{Test grid example with about 7.4k nodes.}\label{fig:test_grids}
\end{figure}

\subsection{Interface values and interpolation operator}\label{subsec:learningLP}

As mentioned in the Section~\ref{sec:loss}, to optimize two-level RAS, one could optimize the parameters in the interface conditions~(\ref{eq:interface}) and/or the interpolation operator~(\ref{eq:cgsolve}). For one-level RAS, on the other hand, there is no interpolation operator (since there is no coarse grid), leaving only the interface values to optimize, as was explored in~\citet{taghibakhshi2022learning} and~\citet{st2007optimized}.
To compare the importance of these two ingredients in the two-level RAS optimization, we compare three different models.
Each of these models is trained as described in Section~\ref{subsec:training}; however, one of the models (labeled ``interface'') is trained by only learning the interface values (ignoring the interpolation head of the network), and using classical RAS interpolation to construct $T^{(\theta)}$. Another model, which we label ``interpolation'', only learns the interpolation operator weights, and uses zeros for interface matrices $L_{i}^{(\theta)}$ to construct $T^{(\theta)}$. The other model uses both training heads (see Figure~\ref{fig:overallgnn}), learning the interface values and the interpolation operator. We compare the performance of these models with classical RAS in Figure~\ref{fig:what2learn-stationary} as a stationary algorithm, and in Figure~\ref{fig:what2learn-loss-fgmres} as a preconditioner for a Krylov method, FGMRES\@.

\begin{figure}
   \includegraphics[width=0.48\textwidth,,trim=0 10 0 10]{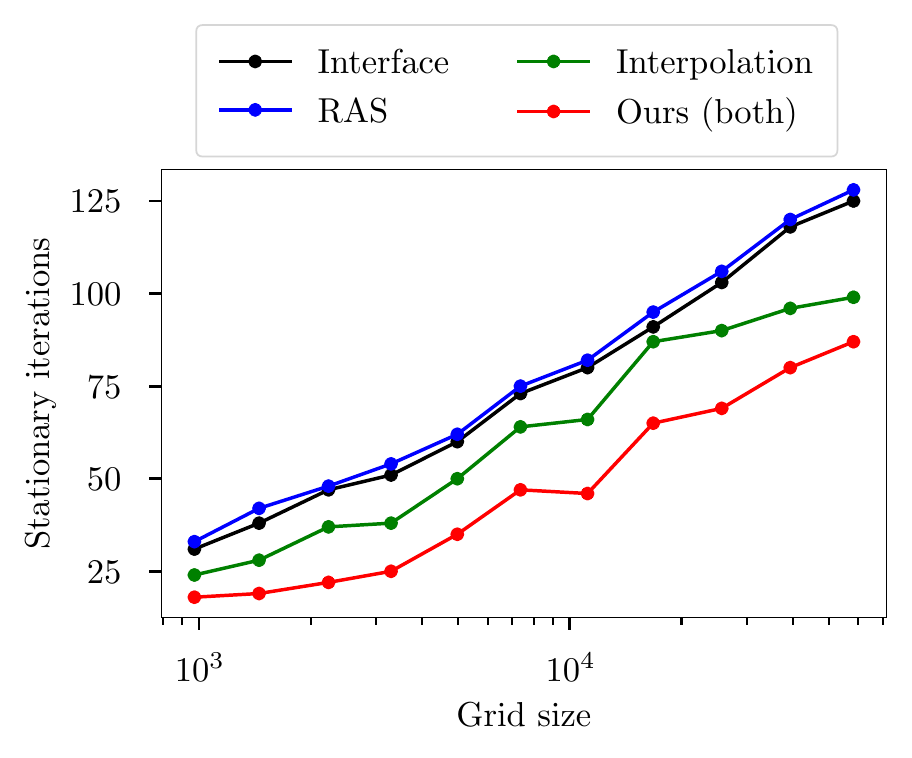}
   \vspace{-5mm}
   \caption{Effect of learning interface values, interpolation operator, or both on stationary iterations.}\label{fig:what2learn-stationary}
\end{figure}

The results show that learning the interpolation operator is more important in optimizing the 2-level RAS\@. Intuitively, the coarse-grid correction process in~\eqref{eq:cgsolve} plays an important role in scaling performance to large problems, due to its global coupling of the discrete DOFs.  The interpolation operator is critical in achieving effective coarse-grid correction.
On the other hand, the interface values are \textit{local} modifications to the subdomains (see~\eqref{eq:interface}), that cannot (by themselves) make up for a poor coarse-grid correction process.  Learning both operators clearly results in the best performance in Figures~\ref{fig:what2learn-stationary} and~\ref{fig:what2learn-loss-fgmres}, where the interpolation operator can be adapted to best complement the effects of the learned interface values.

\subsection{Loss function and sparsity variants}\label{subsec:results}

We first compare five variants of our method with the RAS baseline, as shown in Figures~\ref{fig:loss-ablation-stationary} and~\ref{fig:what2learn-loss-fgmres}. The main model is trained as described in Section~\ref{subsec:training} with the loss function from Section~\ref{sec:loss}. All but one of the variants only differ in their loss function, and share the rest of the details.  The variant labeled ``Max loss'' is trained with the loss function from~\citet{taghibakhshi2022learning}.
The variant labeled ``Max+Trace loss'' is trained with the loss function from~\citet{taghibakhshi2022learning} plus the $\gamma\text{tr}(P^T A P)$ term.  Similarly, the variant labeled ``Softmax loss'' is trained by removing the $\gamma\text{tr}(P^T A P)$ part from the loss function in~(\ref{eq:new_loss}). For the last variant, we restrict the sparsity of the interpolation operator to that obtained by only connecting every fine node to its corresponding coarse node, labelled ``DDM standard sparsity'', and trained using the loss function from~(\ref{eq:new_loss}). As shown in Figure~\ref{fig:loss-ablation-stationary}, the learned operator using this variant achieves worse performance than the baseline RAS\@.  This is partly because, for this variant, the constraint on unit row sums of the interpolation operator effectively removes most of the learned values, since many rows of interpolation have only one nonzero entry in this sparsity pattern.
\begin{figure}
  \centering
   \includegraphics[width=0.48\textwidth,trim=0 10 0 10]{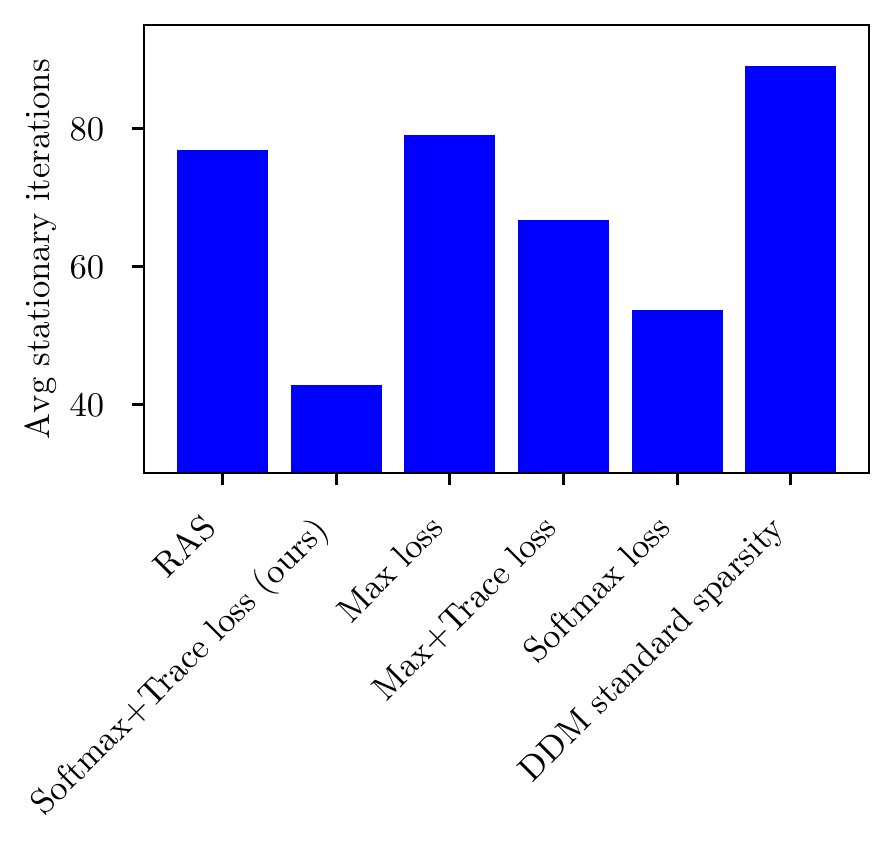}
   \vspace{-5mm}
   \caption{Effect of every ingredient in the model on average stationary iterations. All three variants outperforming the RAS baseline are utilizing modifications introduced in this paper (see~(\ref{eq:new_loss})) compared to the ``Max loss'' from~\citet{taghibakhshi2022learning}.}\label{fig:loss-ablation-stationary}
\end{figure}

To show the effectiveness of the coarse-grid correction and the learned operator, we also compare two-level RAS and our two-level learned RAS (MLORAS 2-level) with one-level RAS and one-level optimized RAS from~\citet{taghibakhshi2022learning} in Figures~\ref{fig:level-ablation-stationary} and~\ref{fig:level-ablation-fgmres}.

\begin{figure}
  \centering
     \includegraphics[width=0.48\textwidth,trim=0 10 0 10]{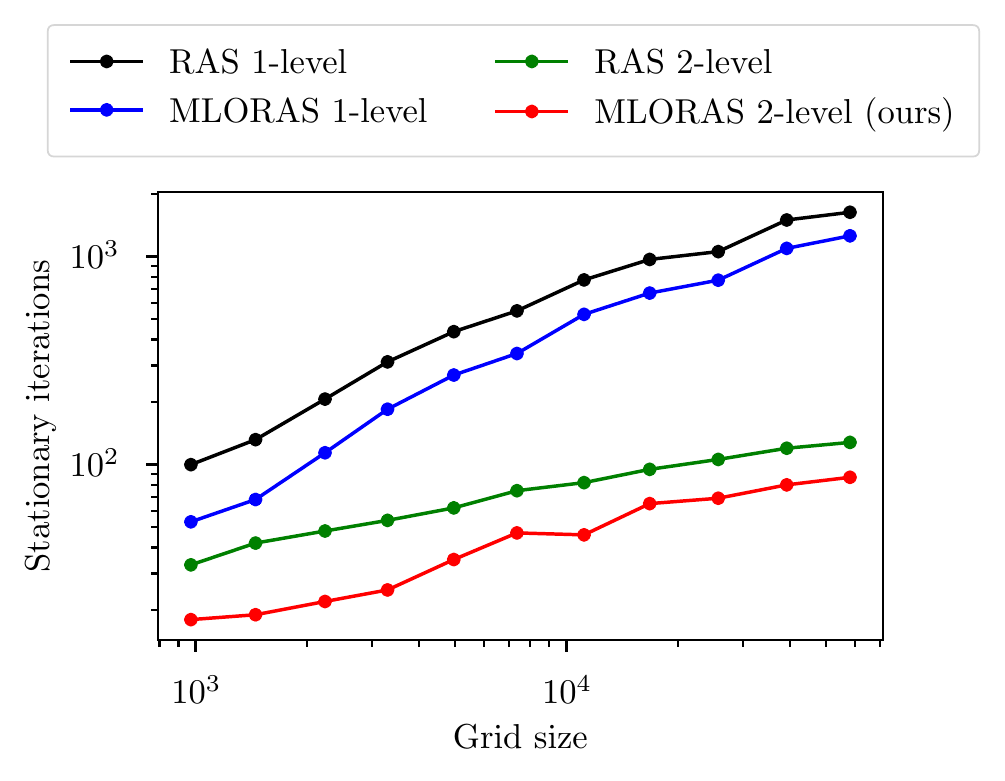}
     \vspace{-5mm}
     \caption{Comparison of stationary iterations of 2-level methods with 1-level methods from~\citet{taghibakhshi2022learning}.}\label{fig:level-ablation-stationary}
\end{figure}

\subsection{Comparison to Graph U-net and number of layers}

In this section, the performance of Graph U-net and MG-GNN with different numbers of layers is studied. Figures~\ref{fig:ablation-stationary} and~\ref{fig:level-ablation-fgmres} show the performance of each of the models as stationary iterations and preconditioners for FGMRES, respectively. For a fair comparison, the MG-GNN and graph U-nets that share the same number of layers also have the same number of trainable parameters. As shown here, the best performance is achieved with 4 layers of MG-GNN, and MG-GNN strictly outperforms the graph U-net architecture with the same number of layers.

\section{Conclusion}
In this study, we proposed a novel graph neural network architecture, which we call multigrid graph neural network (MG-GNN), to  learn two-level optimized restricted additive Schwarz (optimized RAS or ORAS) preconditioners. This new MG-GNN ensures cross-scale information sharing at every layer, eliminating the need to use multiple graph convolutions for long range information passing, which was a shortcoming of prior graph network architectures. Moreover, MG-GNN scales linearly with problem size, enabling its use for large graph problems. We also introduce a novel unsupervised loss function, which is essential to obtain improved results compared to classical two-level RAS\@. We train our method using relatively small graphs, but we test it on graphs which are orders of magnitude larger than the training set, and we show our method consistently outperforms the classical approach, both as a stationary algorithm and as an FGMRES preconditioner.
\begin{figure}
  \centering
     \includegraphics[width=0.48\textwidth,trim=0 10 0 10]{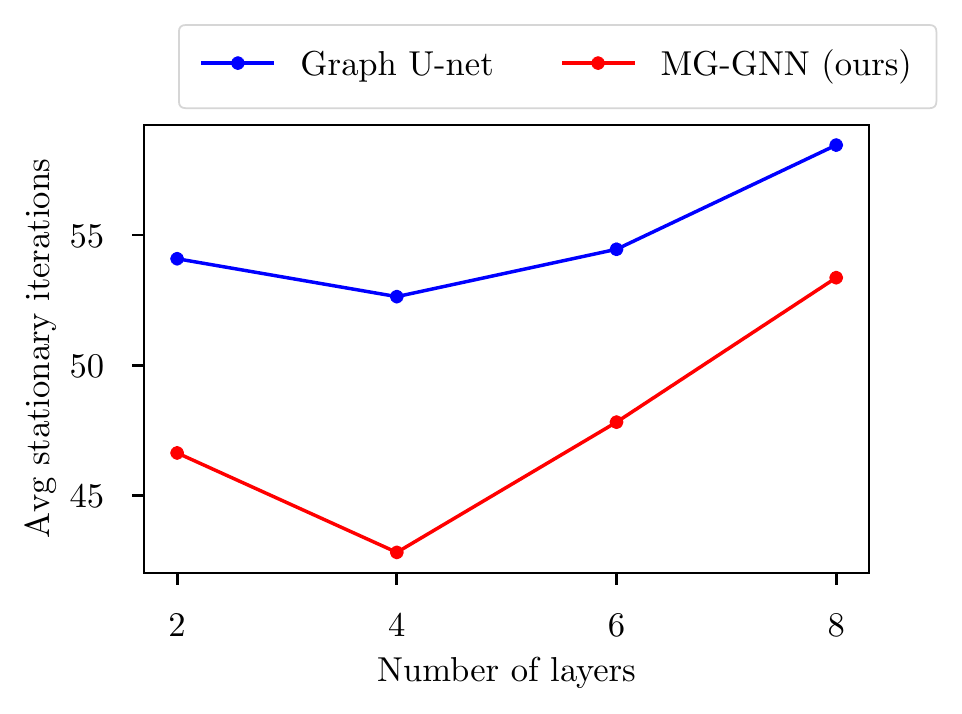}
     \vspace{-5mm}
       \caption{Average stationary iterations of graph U-net and MG-GNN with different number of layers on the test set.}\label{fig:ablation-stationary}
\end{figure}

% Acknowledgements should only appear in the accepted version.
%\section*{Acknowledgements}

\bibliography{ref-multilevel-mloras}
\bibliographystyle{icml2023}

%%%%%%%%%%%%%%%%%%%%%%%%%%%%%%%%%%%%%%%%%%%%%%%%%%%%%%%%%%%%%%%%%%%%%%%%%%%%%%%
%%%%%%%%%%%%%%%%%%%%%%%%%%%%%%%%%%%%%%%%%%%%%%%%%%%%%%%%%%%%%%%%%%%%%%%%%%%%%%%
% APPENDIX
%%%%%%%%%%%%%%%%%%%%%%%%%%%%%%%%%%%%%%%%%%%%%%%%%%%%%%%%%%%%%%%%%%%%%%%%%%%%%%%
%%%%%%%%%%%%%%%%%%%%%%%%%%%%%%%%%%%%%%%%%%%%%%%%%%%%%%%%%%%%%%%%%%%%%%%%%%%%%%%
\newpage
\appendix
\onecolumn
\section{FGMRES plots}\label{sec:appendix}

In Section~\ref{sec:experiments}, in Figures~\ref{fig:what2learn-stationary} to Figure~\ref{fig:ablation-stationary}, the performance of the methods was evaluated by considering the convergence of stationary iterations. Here, we present another possible evaluation criterion, assessing the number of iterations to convergence for the preconditioned systems using FGMRES, a standard Krylov method. The following figures are analogous to those provided in the main paper, and demonstrate that our method also achieves superior results compared to other methods, and that the MG-GNN architecture outperforms graph U-nets.

\begin{figure}[H]
  \centering
   \includegraphics[width=0.48\textwidth]{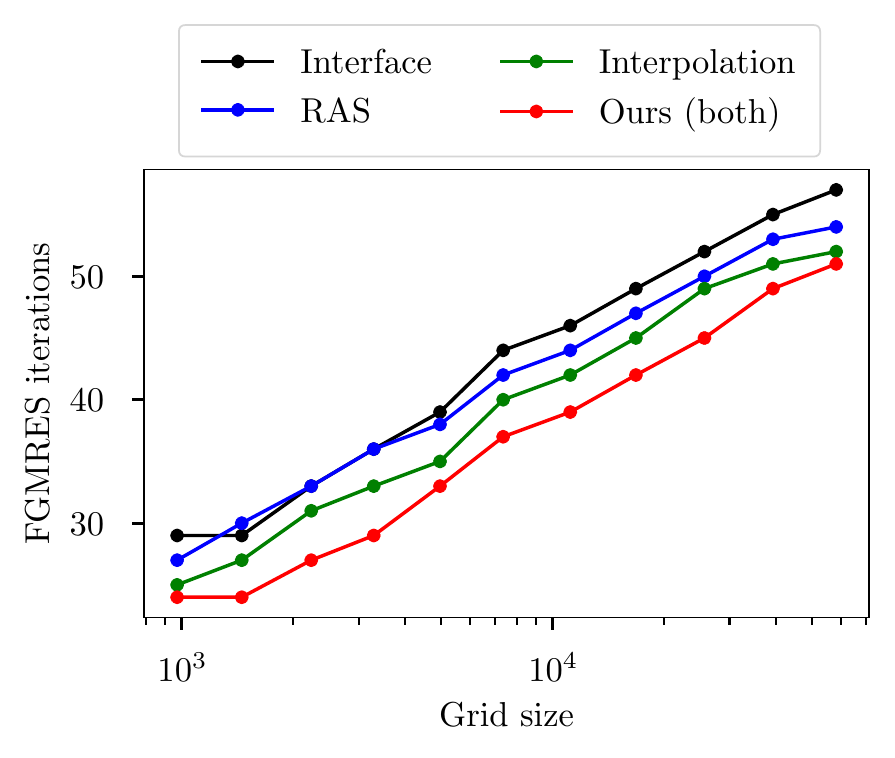}
   \hfill
   \includegraphics[width=0.48\textwidth]{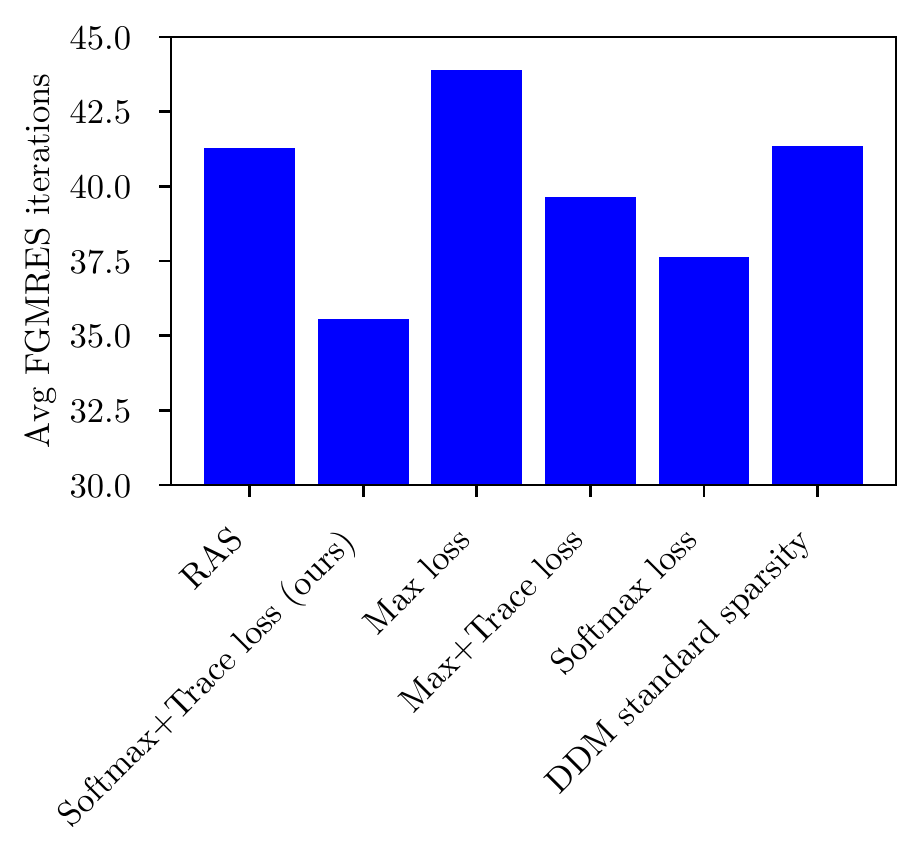}
  \caption{Left: Effect of learning interface values, interpolation operator, or both on FGMRES iterations. All three variants outperforming the RAS baseline are utilizing modifications introduced in this paper~(\ref{eq:new_loss}) compared to the ``Max loss'' from~\citet{taghibakhshi2022learning}. Right: Effect of every ingredient in the model on average FGMRES iterations.}\label{fig:what2learn-loss-fgmres}
\end{figure}

\begin{figure}[H]
  \centering
     \includegraphics[width=0.48\textwidth]{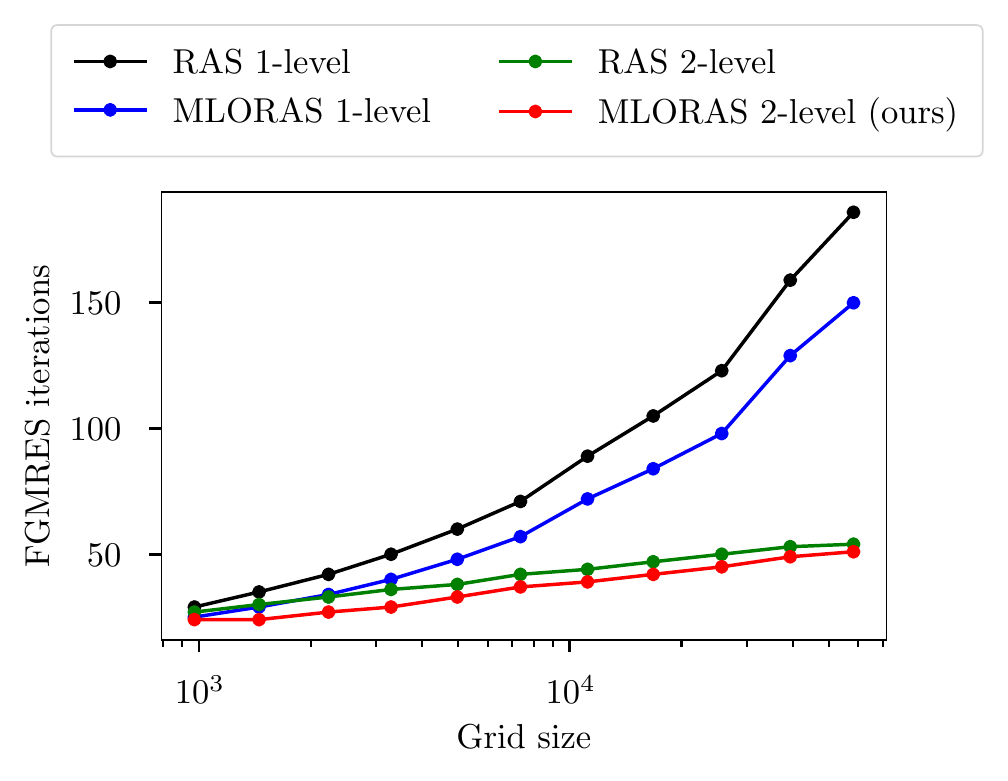}
     \hfill
     \includegraphics[width=0.48\textwidth]{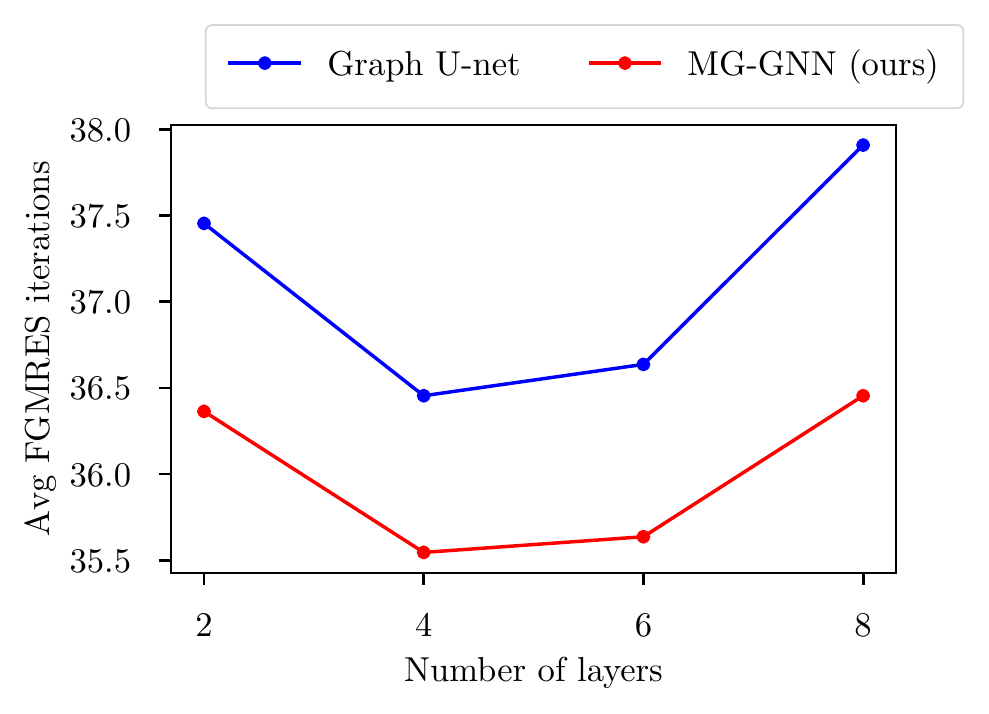}
      \caption{Left: Comparison of FGMRES iterations of 2-level methods with 1-level methods from~\citet{taghibakhshi2022learning}. Right: Average FGMRES iterations of graph U-net and MG-GNN with different number of layers on the test set.}\label{fig:level-ablation-fgmres}
\end{figure}

\section{Model architecture}\label{sec:appendix-model}

\paragraph{Inputs and outputs:} The model takes any unstructured grid as its input, which consists of the node features, edge features, and adjacency matrix of both the fine and coarse grids. Every node on the fine level has a binary feature, indicating whether it lies on the boundary of a subdomain. Fine level edge features are obtained from the discretization of the underlying PDE, $A$, and the adjacency matrix of the fine level is simply the sparsity of $A$. Similar attributes for the coarse level are obtained as described in Section~\ref{sec:mg-gnn}, Equations~(\ref{eq:next_feature}) and~(\ref{eq:next_A}), and Lloyd aggregation has been used for obtaining subdomains throughout. The outputs of the model are the learned interface values and the interpolation operator.

We use node and edge preprocessing (3 fully connected layers of dimension 128, followed by ReLU activations, in the node and feature space, respectively) followed by 4 layers of MG-GNN\@. For $\text{GNN}^{(\ell)}$ in~(\ref{eq:gnni}) and $F^{\ell\rightarrow \ell}$ in~(\ref{eq:fij}), we use a TAGConv layer~\cite{du2017topology} and, for $F^{\ell\rightarrow k}$ with $\ell\ne k$, we use a heterogeneous message passing GNN as shown in Equation~(\ref{eq:mpnn1}). Specifically, we choose summation as the permutation invariant operator in~(\ref{eq:mpnn1}) and, for the MLPs, we use two fully connected layers of size 128 with ReLU nonlinearity for $f^{\ell\rightarrow k}$ and $g^{\ell\rightarrow k}(x,y) = x$.

Following the MG-GNN layers, the network will split in two heads, each having a stack layer (which essentially concatenates the features of nodes on each side of every edge) and an edge feature post-processing (see Figure~\ref{fig:Edgepostprocessing} for details). The edge weights between the coarse and fine level are the learned  interpolation operator weights, and the edge values along the subdomains in the fine level are the learned interface values. The upper head of the network has a masking block at the end, which masks the edge values that are not along the boundary, hence only outputting the learned interface values. The overall GNN architecture for learning the interpolation operator and the interface values is shown in Figure~\ref{fig:overallgnn}.

\begin{figure}[H]
  \centering
  \includegraphics[width=1.0\textwidth]{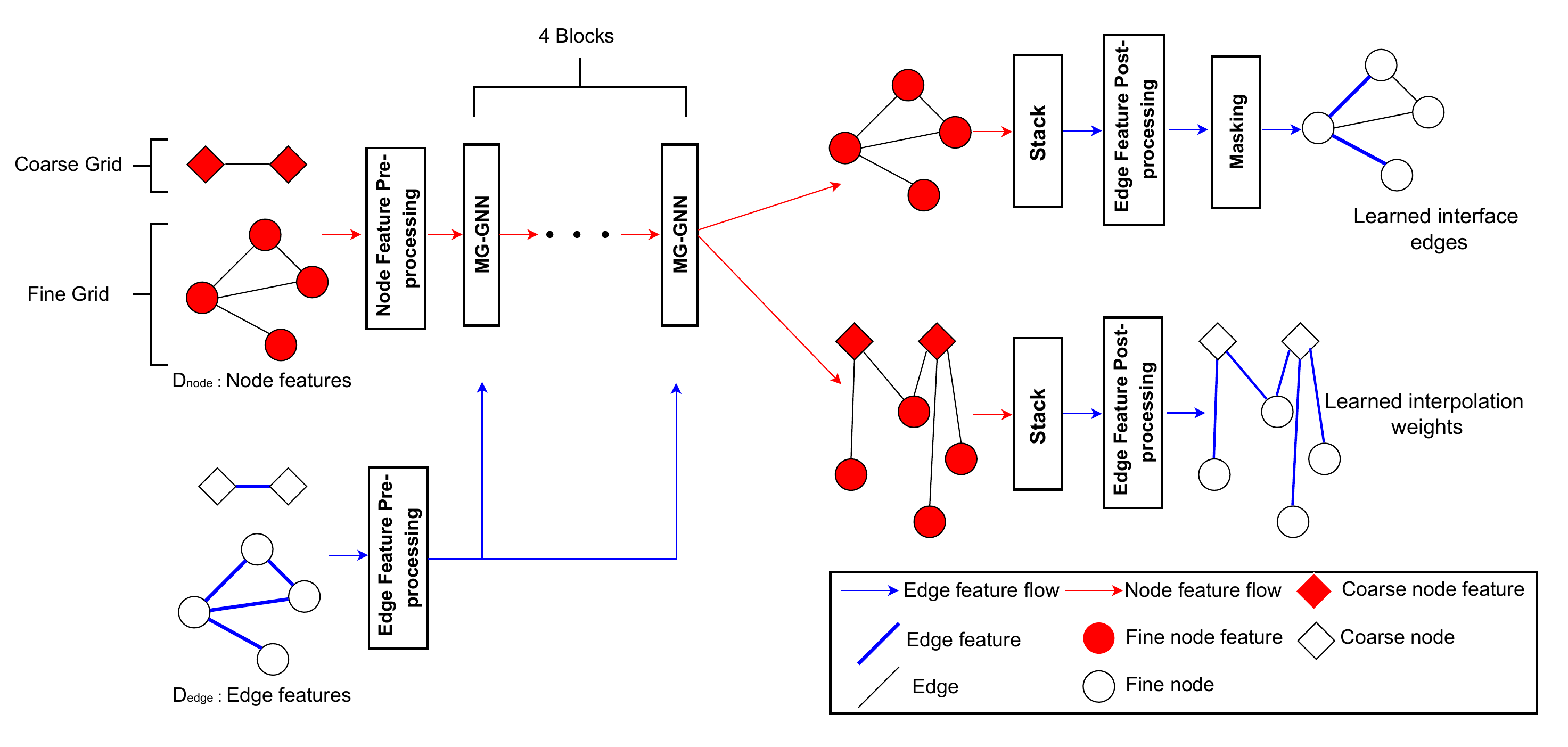}
  \caption{GNN architecture used in this study.}\label{fig:overallgnn}
\end{figure}

\begin{figure}[H]
  \centering
  \includegraphics[width=0.7\textwidth]{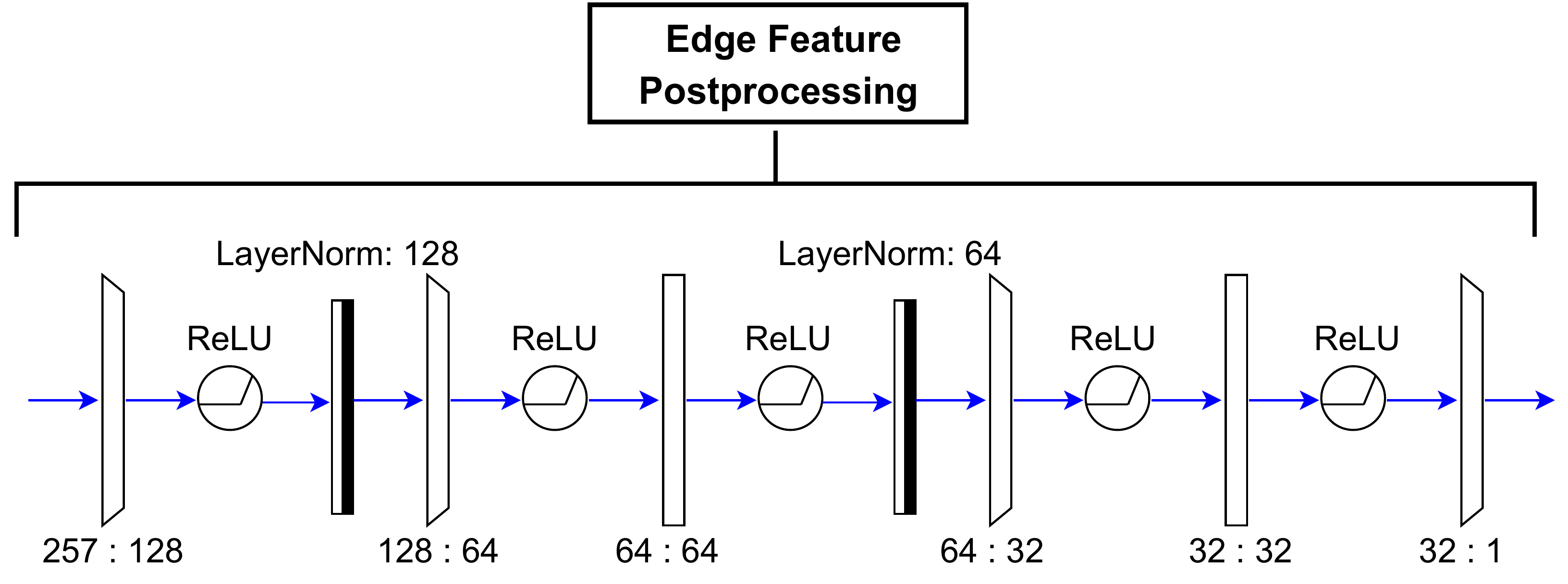}
  \caption{Edge feature post-processing block.}\label{fig:Edgepostprocessing}
\end{figure}
\end{document}